  \providecommand{\R}{\mathbb{R}} 
  \renewcommand{\epsilon}{\varepsilon}
\theoremstyle{plain}
\theoremstyle{definition}
\DeclarePairedDelimiterX{\inp}[2]{\langle}{\rangle}{#1, #2}
\DeclarePairedDelimiterX{\abs}[1]{\lvert}{\rvert}{#1}
\DeclarePairedDelimiterX{\norm}[1]{\lVert}{\rVert}{#1}
\DeclarePairedDelimiterX{\cbr}[1]{\{}{\}}{#1} 
\DeclarePairedDelimiterX{\rbr}[1]{(}{)}{#1} 
\DeclarePairedDelimiterX{\sbr}[1]{[}{]}{#1} 
\definecolor{mydarkblue}{rgb}{0,0.08,0.45}
\newcommand{\cP}{{\cal P}}
\newcommand{\eqdef}{\overset{\text{def}}{=}} 
\definecolor{mydarkgreen}{RGB}{39,130,67}
\definecolor{mydarkred}{RGB}{192,47,25}
\definecolor{myteal}{RGB}{27,158,119}
\definecolor{myorange}{RGB}{217,95,2}
\definecolor{myred}{RGB}{231,41,138}
\definecolor{mypurple}{RGB}{152,78,163}
\definecolor{myblue}{RGB}{55,126,184}
\definecolor{mygreen}{RGB}{0,100,0}
\definecolor{mydarkred}{RGB}{192,47,25}
\definecolor{sgd}{rgb}{0., 0.2, 0.75}
\definecolor{quant}{rgb}{0.55, 0., 0.}
\theoremstyle{plain}
\newtheorem{theorem}{Theorem}
\newtheorem{lemma}[theorem]{Lemma}
\theoremstyle{definition}
\newcounter{mytheorem}
\renewcommand{\part}[1]{%
  \if@noskipsec \leavevmode \fi
  \par
  \@afterindentfalse
  \phantomsection
  \refstepcounter{part}
  \addcontentsline{toc}{part}{\thepart\hspace{1em}#1}
  \nobreak
  \@afterheading
  \relax
  \addcontentsline{toc}{xpart}{#1}\stepcounter{ptc}}
\definecolor{lightred}{rgb}{1,0.8,0.8} 
\definecolor{lightgreen}{rgb}{0.8,1,0.8}
\definecolor{lightblue}{rgb}{0.88,0.96,1}
\definecolor{lightgray}{rgb}{0.9,0.9,0.9}
\title{LoFT: \underline{Lo}w-Rank Adaptation That Behaves \\ Like \underline{F}ull Fine-\underline{T}uning} 
\author{\textbf{Nurbek Tastan}\textsuperscript{1} \quad 
\textbf{Stefanos Laskaridis}\textsuperscript{2}\thanks{Work done independently of Amazon.} \quad 
\textbf{Martin Tak\'{a}\v{c}\textsuperscript{1}} \\ 
\textbf{Karthik Nandakumar\textsuperscript{1,3}} \quad 
\textbf{Samuel Horv\'{a}th\textsuperscript{1}} \\ 
\textsuperscript{1}Mohamed bin Zayed University of Artificial Intelligence (MBZUAI), UAE\\
\textsuperscript{2}Amazon Science, UK\\
\textsuperscript{3}Michigan State University, USA\\
\texttt{\{nurbek.tastan, martin.takac, samuel.horvath\}@mbzuai.ac.ae} \\ 
\texttt{mail@stefanos.cc, nandakum@msu.edu}}
\begin{document}

\doparttoc 
\faketableofcontents 
\part{Main}

\maketitle

\begin{abstract}
    Large pre-trained models are commonly adapted to downstream tasks using parameter-efficient fine-tuning methods such as Low-Rank Adaptation (LoRA), which injects small trainable low-rank matrices instead of updating all weights. While LoRA dramatically reduces trainable parameters with little overhead, it can still underperform full fine-tuning in accuracy and often converges more slowly. We introduce LoFT, a novel low-rank adaptation method that behaves like full fine-tuning by aligning the optimizer’s internal dynamics with those of updating all model weights. LoFT not only learns weight updates in a low-rank subspace (like LoRA) but also properly projects the optimizer’s first and second moments (Adam’s momentum and variance) into the same subspace, mirroring full-model updates. By aligning the low-rank update itself with the full update, LoFT eliminates the need for tuning extra hyperparameters, e.g., the LoRA scaling factor $\alpha$. Empirically, this approach substantially narrows the performance gap between adapter-based tuning and full fine-tuning and consistently outperforms standard LoRA-style methods, all without increasing inference cost. The code is available at \href{https://github.com/tnurbek/loft}{https://github.com/tnurbek/loft}.
\end{abstract}

\vspace{-0.2cm}
\section{Introduction} 
\vspace{-0.2cm}

Fine-tuning large-scale pre-trained models for specific tasks has become a standard paradigm in natural language processing and other domains. However, as model sizes grow into the billions of parameters, full fine-tuning (i.e., updating every weight) becomes computationally expensive and impractical, especially in multi-task~\citep{chronopoulou2023adaptersoup} or multi-user~\citep{yi2023pfedlora} settings. Parameter-efficient fine-tuning (PEFT) techniques address this challenge by updating only a small subset of parameters while reusing the vast majority of pre-trained weights. Among these, Low-Rank Adaptation (LoRA) has emerged as a popular and effective solution. LoRA freezes the original weights and injects trainable low-rank matrices into selected layers, substantially reducing the number of learnable parameters. Remarkably, LoRA often matches -- and sometimes can exceed -- the performance of full fine-tuning on certain benchmarks, all while incurring minimal runtime overhead and no additional inference latency. This makes it an attractive alternative to other methods like sequential adapters~\citep{houlsby2019parameter,pfeiffer2021adapterfusion}, which typically introduce new layers and increased latency.
\begin{figure}[t]
    \centering
    \includegraphics[width=0.99\linewidth]{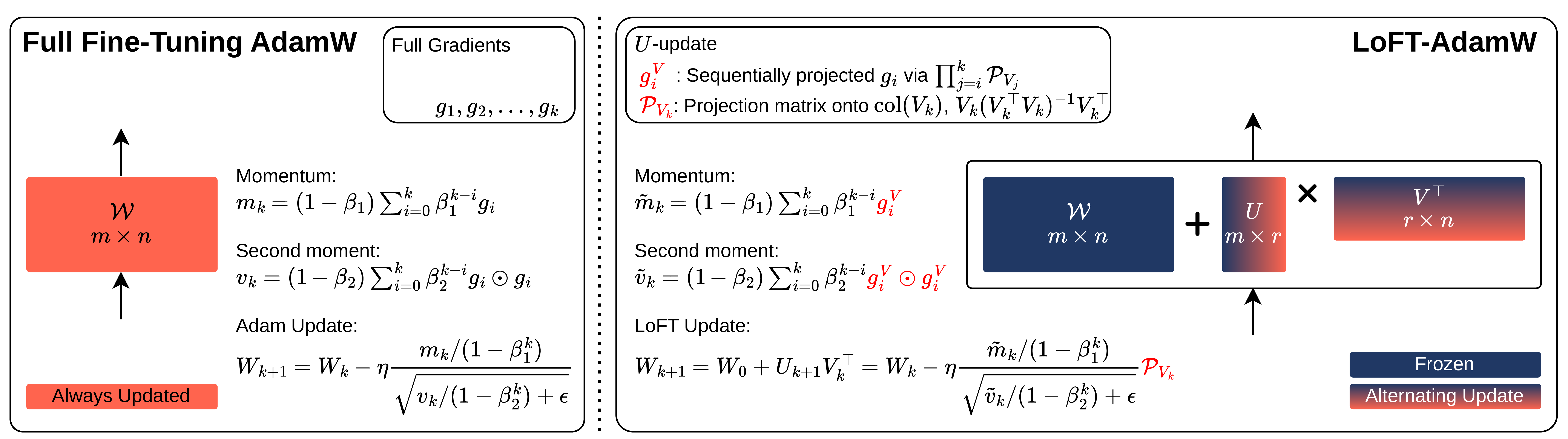}
    \caption{LoFT visualization. LoFT can be interpreted as the tightest approximation to full fine-tuning under the constraint that each update lies in the subspace defined by $V$ (when updating $U$). The LoFT-AdamW update consists of a momentum and second-moment estimate constructed using projected gradients. The final update is then projected back onto the subspace of $V$ to respect the low-rank constraint. When $V$ is the updated component instead of $U$, the roles of $U$ and $V$ are simply exchanged, and the update is applied to $W^\top$ instead of $W$.}
    \label{fig:enter-label}
    \vspace{-1.em}
\end{figure}
Despite its success, LoRA and similar low-rank approaches still fall short of full fine-tuning in some settings. Empirical studies have reported a persistent performance gap and slower convergence rates compared to full-model updates~\citep{biderman2024lora,wang2024lora}. These gaps indicate that the optimization dynamics of LoRA differ in important ways from those of full fine-tuning. Recent work~\citep{liu2024dora,wang2024lorapro} has attempted to close this gap by focusing on more accurate gradient approximations within the low-rank subspace. This is motivated by the observation that LoRA’s updates can omit or misestimate important directions in the full gradient, leading to suboptimal solutions. In this work, we demonstrate that this is only part of the story: optimizer state misalignment -- specifically in the first and second moments used by AdamW~\citep{loshchilov2017decoupled}, the de facto optimizer in large-scale training -- also plays a critical role. When these internal statistics are not properly aligned with the low-rank constraint, it undermines the effectiveness of the adaptation.

Finally, a practical complication in standard LoRA is the introduction of a scaling hyperparameter, $\alpha$, often normalized by the rank. This scaling factor modulates the contribution of the low-rank update and must be carefully tuned. Improper settings can lead to poor performance or even divergence by overpowering the backbone model~\citep{lee2023platypus,malinovsky2024randomized}. Altogether, these challenges -- i.e., the gradient and optimizer state misalignment, as well as the additional hyperparameter sensitivity -- limit LoRA's ability to fully replicate the robustness and effectiveness of unconstrained full fine-tuning.

Our main contributions are summarized as follows:
\begin{itemize}[left=0pt, nosep]
    \item We identify that not only gradients but also optimizer states (i.e., first and second moments) suffer from misalignment when approximating full fine-tuning with low-rank updates.
    \item We propose Low rank adaptation that mimics Full fine-Tuning (LoFT), a novel LoRA-based optimizer that addresses these issues by closely approximating full fine-tuning across all optimization dimensions. LoFT consists of five core components: gradient scaling, alternating updates, optimizer state calibration, construction of a projected full fine-tuning update followed by low-rank projection, and projected full fine-tuning-aware clipping.
    \item To the best of our knowledge, LoFT is the first low-rank adaptation method that exactly reduces to AdamW~\citep{loshchilov2017decoupled} in the full-rank limit.
    \item We conduct extensive experiments on both synthetic and real-world tasks across multiple modalities, demonstrating the effectiveness and generality of LoFT.
\end{itemize}

\vspace{-0.2cm}
\section{Method}
\vspace{-0.2cm}

\label{sec: method}
We focus on the standard fine-tuning setup, where a pre-trained model is adapted to a downstream task. In full fine-tuning, each weight matrix $W$ is updated by a full-rank increment $\Delta W$. To reduce computational cost, LoRA proposes a low-rank reparameterization
\begin{align*}
    W = W_0 + \Delta W = W_0 + UV^\top,
\end{align*}
where $W \in \mathbb{R}^{m \times n}$, $U \in \mathbb{R}^{m \times r}$, $V \in \mathbb{R}^{n \times r}$, and $r \ll \min\{m, n\}$. Only $U$ and $V$ are trainable, reducing the gradient and optimizer state footprint to $\mathcal{O}((m + n)r)$ compared to $\mathcal{O}(mn)$ in full fine-tuning. LoRA typically introduces a scaling factor $\alpha > 0$ to modulate the magnitude of the low-rank update. However, in our study, we set $\alpha = 1$ and attribute the need for this hyperparameter to a misalignment between LoRA and full fine-tuning, which we address in the subsequent sections.

\vspace{-0.2cm}
\subsection{Gradient Descent for Full Fine-Tuning vs. LoRA}
\label{sec:gd_vs_lora}
\vspace{-0.2cm}

Let $f(W): \R^{m \times n} \to \R$ denote a scalar loss function, with $W$ representing the parameters of a single linear layer. In standard full fine-tuning with gradient descent, the one-step update is 
\begin{align}
    \label{eq:gd_ft}
    W^+ = W - \eta \nabla_W f(W),
\end{align}
where $\eta > 0$ is the learning rate, and $\nabla_W f(W)$ is the gradient of the loss with respect to $W$. With LoRA parametrization, the one-step update becomes
\begin{align}
    \label{eq:gd_lora_1}
    W^+ = W_0 + U^+(V^+)^\top = W_0 + (U - \eta \nabla_U f(W))(V - \eta \nabla_V f(W))^\top.
\end{align}
Applying the chain rule yields
\begin{align*}
    \nabla_U f(W) = \nabla_W f(W) V, \qquad \nabla_V f(W) = \nabla_W f(W)^\top U.
\end{align*}
Substituting these into~\eqref{eq:gd_lora_1} gives
\begin{align}
\label{eq:gd_lora_2}
    W^+ = W - \eta \left( \nabla_W f(W) VV^\top + U U^\top \nabla_W f(W) \right) + \eta^2 \nabla_W f(W) VU^\top \nabla_W f(W).
\end{align}

\begin{table}[t]
    \centering
    \caption{The six core building blocks of LoFT for aligning low-rank adaptation with full fine-tuning.
    }
    \label{tab:loft_building_blocks}
    \resizebox{\linewidth}{!}{
    \begin{tabular}{ll} 
        \toprule 
        \rowcolor{lightblue} \textbf{Component} & \textbf{Purpose} \\ \midrule 
        Alternating Updates~(\ref{bb:1}) & \textit{Eliminate second-order cross terms from LoRA dynamics.} \\ 
        Gradient Scaling~(\ref{bb:2}) & \textit{Ensure scale-invariance of low-rank updates.} \\ 
        Optim. States Calibration~(\ref{bb:3}, \ref{bb:4}) & \textit{Align moments estimates across changing low-rank spaces.}  \\
        Projected Full Update~(\ref{bb:5}) & \textit{Reconstruct the full-model update and project it onto the low-rank subspace.} \\
        Gradient Clipping~(\ref{bb:6}) & \textit{Match full fine-tuning clipping behavior.} \\ 
        \bottomrule 
    \end{tabular}
    }
\end{table}

Equation~\eqref{eq:gd_lora_2} highlights the first discrepancy between LoRA and full fine-tuning: the additional $\eta^2$ term, which depends quadratically on the gradient. While seemingly small, this term can materially affect convergence, as we show later in a controlled experiment. A straightforward way to eliminate this term is through alternating updates. 

\begin{remark}{Alternating Updates}{1}
 Do not update $U$ and $V$ simultaneously, but perform alternating updates.
\end{remark}

Without loss of generality, assuming we update only $U$, the resulting update to $W$ becomes
\begin{align}
\label{eq:gd_lora_3}
    W^+ = W - \eta \nabla_W f(W) VV^\top. 
\end{align}

However, this update suffers from a scale ambiguity: for any $c \neq 0$, $U V^\top = (c U)(V / c)^\top$, but the update scales differently with $c$. To resolve this, observe that the update direction always lies in the column space of $V$, allowing us to scale the update using an $r \times r$ matrix\footnote{We assume $V$ is of full rank. If not, we can use the pseudo-inverse.} \textcolor{red}{$\left(V^\top V\right)^{-1}$}
\begin{align}
\label{eq:gd_lora_4}
    W^+ = W - \eta \nabla_W f(W) V \textcolor{red}{\left(V^\top V\right)^{-1}} V^\top = W - \eta \nabla_W f(W)  \textcolor{red}{\cP_V},
\end{align}
where $\mathcal{P}_V = V(V^\top V)^{-1}V^\top$ is the projection matrix onto the column space of $V$. This ensures the update is the closest low-rank approximation to $\nabla_W f(W)$ under the given subspace. The associated computational cost is $\mathcal{O}(nr^2 + r^3)$. This update defines our second building block.

\begin{remark}{Use Scaled Gradients}{2}
\vspace{-1.em}
    \begin{align*}
        \tilde{\nabla}_U f(W) = \nabla_U f(W) \textcolor{red}{\left(V^\top V\right)^{-1}}, \qquad \tilde{\nabla}_V f(W) = \nabla_V f(W) \textcolor{red}{\left(U^\top U\right)^{-1}}.
    \end{align*}
\end{remark}

We are not the first to suggest this; \citet{zhang2024riemannian} derived a similar result from the perspective of Riemannian optimization.

\subsection{First Moment Misalignment}
\label{sec:fist_moment_misalignment}

In practice, gradients are often estimated using momentum. Specifically, the first moment $m_k$ is computed as
   $ 
   m_{k} = \beta_1 m_{k-1} + (1 - \beta_1)g_k,
   $
where $\beta_1 \in [0,1)$ is the momentum coefficient and $g_k$ is the stochastic gradient, and the subscript denotes iteration counter. For full fine-tuning, the resulting momentum update is
\begin{align}
    \label{eq:mom_ft}
    m^{W}_k =(1-\beta_1)\sum_{i=0}^k \beta_1^{k-i} \nabla_W f(W_i).
\end{align}
When updating $U$ under the LoRA parameterization, the effect on $W$ becomes 
\begin{equation*}
    \begin{aligned}
        m^{U}_k V^\top &= (1-\beta_1)\sum_{i=0}^k \beta_1^{k-i} \tilde{\nabla}_U f(W_i) V^\top =  (1-\beta_1)\sum_{i=0}^k \beta_1^{k-i} \nabla_W f(W_i)V_i \left(V_i^\top V_i \right)^{-1} V_k^\top, 
    \end{aligned}
\end{equation*}
which does not represent a proper projection due to the mismatch between $V_i$ and $V_k$. To address this, we introduce a recalibration step
\begin{align}
    \label{eq:mom_lora_rescaled}
    m^{U}_k &= \beta_1 m^U_{k-1} \textcolor{red}{C_k^V} + (1 - \beta_1)\tilde{\nabla}_U f(W_k),
\end{align}
where $\textcolor{red}{C^V_k \eqdef (V_{k-1}^\top V_k)(V_k^\top V_k)^{-1}}$ is a calibration matrix. Substituting this back gives
\begin{align}
    \label{eq:mom_lora_W_U}
    \tilde{m}^U_k = m^{U}_k V_k^\top =  (1-\beta_1)\sum_{i=0}^k \beta_1^{k-i} \nabla_W f(W_i) \textcolor{red}{\prod_{j=i}^k \mathcal{P}_{V_j}} = (1-\beta_1)\sum_{i=0}^k \beta_1^{k-i} g^V_i ,
\end{align}
where $\prod_{j=i}^k \mathcal{P}_{V_j}$ is the sequential projection. Let $g^V_i \eqdef \nabla_W f(W_i) \prod_{j=i}^k \mathcal{P}_{V_j}$. This expression provides the tightest possible estimate (in $\ell_2$ distance) of the momentum under the constraints of the evolving low-rank subspaces defined by $V_i$'s. Storing previous iterates $\{V_{k-1}, U_{k-1}\}$ incurs an additional memory cost of $\mathcal{O}((m+n)r)$.
\begin{remark}{Recalibrate Momentum}{3}
    \begin{equation*}
        \begin{aligned}
            m^{U}_k &= \beta_1 m^U_{k-1} \textcolor{red}{C^V_k} + (1 - \beta_1)\tilde{\nabla}_U f(W_i), \\
            m^{V}_k &= \beta_1 m^V_{k-1} \textcolor{red}{C^U_k} + (1 - \beta_1)\tilde{\nabla}_V f(W_i).
        \end{aligned}
    \end{equation*}
\end{remark}

\subsection{Second Moment Misalignment}
\label{sec:second_moment_misalagnment}

Analogically, for Adam-style updates, the ideal update to $W$ when $U$ is being updated, given subspace constraints, would be
\begin{align}
\label{eq:adam_ideal_lora}
    \frac{\nicefrac{\tilde{m}^U_k}{(1 - \beta_1^k)}}{\sqrt{\nicefrac{\tilde{v}^U_k}{(1 - \beta_2^k)}} + \epsilon}  \cP_{V_k}, \text{ s.t. } \tilde{m}^U_k = (1-\beta_1)\sum_{i=0}^k \beta_1^{k-i} g^V_i, \quad \tilde{v}^U_k = (1-\beta_2)\sum_{i=0}^k \beta_2^{k-i} g^V_i \odot g^V_i, 
\end{align}
where $\tilde{m}^U_k$ is as defined in~\eqref{eq:mom_lora_W_U}, and $\tilde{v}^U_k = (1-\beta_2)\sum_{i=0}^k \beta_2^{k-i} g^V_i \odot g^V_i$ is the second moment estimate. The symbol $\odot$ denotes element-wise multiplication. Note that this update is constructed to lie in the subspace defined by $V_k$, since this is a necessary constraint due to the update rule; see~\eqref{eq:gd_lora_3}.
To compute $\tilde{v}^U_k$ efficiently, we use the following identities from~\citet{slyusar1999family}
\begin{align}
    \label{eq:kronecker}
    (A \bullet B) (C \otimes D) = (AC) \bullet (BD), \qquad (AB) \odot (CD) = (A \bullet C)(B \ast D)
\end{align}
where $\otimes$ is the Kronecker product, $\bullet$ is the transposed Khatri-Rao product, and $\ast$ is the standard Khatri-Rao product. We define the calibrated second-moment accumulator as 
\begin{align}
    p^{U}_k = \beta_2 p^U_{k-1} (C^V_k \otimes C^V_k) + (1 - \beta_2)(\tilde{\nabla}_U f(W_k) \bullet \tilde{\nabla}_U f(W_k)), 
\end{align}
where $p^U_k$ is a matrix of size $nr \times r$ that stores the cross-terms necessary to reconstruct the second moment after transformation. The associated memory overhead is $\mathcal{O}((m + n)r^2)$, which is the \textbf{main limitation} of our approach. For this reason, maintaining a small rank $r$ is crucial for memory efficiency. In practice, this constraint is acceptable as long as $r \leq \sqrt{\min\{m, n\}}$, which we find to be both reasonable and sufficient for capturing effective low-rank updates. In the experiment, LoFT leads to the memory increase of up to $25.65\%$ compared to LoRA~\citep{hu2022lora}, but \textbf{improves or matches} the memory of more performant DoRA~\citep{liu2024dora}. Furthermore, we observe that omitting second-moment calibration limits memory increase to less than $6\%$ relative to LoRA and only incurs marginal performance degradation ($\sim0.1\%$). Details are provided in Appendix~\ref{appendix: memory-eff}. Finally, in Appendix~\ref{sec:muon}, we introduce a version of LoFT based on the Muon~\citep{jordan2024muon} optimizer that does not exhibit this issue, as all optimizer states are linear functions of gradients.

\begin{remark}{Second Moment Alignment}{4} 
    Use cross-terms for second moment accumulation to enable second moment recalibration
    \begin{equation}
        \begin{aligned}
            \label{eq:sec_mom_recalibration}
            p^{U}_k &= \beta_2 p^U_{k-1} \textcolor{red}{(C^V_k \otimes C^V_k)} + (1 - \beta_2)(\tilde{\nabla}_U f(W_i) \bullet \tilde{\nabla}_U f(W_i)), \\
             p^{V}_k &= \beta_2 p^V_{k-1} \textcolor{red}{(C^U_k \otimes C^U_k)} + (1 - \beta_2)(\tilde{\nabla}_V f(W_i) \bullet \tilde{\nabla}_V f(W_i)).
        \end{aligned}
    \end{equation}
\end{remark}
Using $p^{U}_k$, we compute $\tilde{v}^U_k = p^U_k (V_k \ast V_k)$ and apply the following update.
\begin{remark}{Reconstruct Full Update Followed by Projection}{5}
    For the Adam version of LoFT, update $U$ and $V$ as
    \begin{equation}
        \begin{aligned}
            \label{eq:u_v_update}
            U_{k+1} &= U_k - \eta_k  \frac{\nicefrac{m^U_k V_k^\top}{(1 - \beta_1^k)}}{\sqrt{\nicefrac{p^U_k \left(V_k \ast V_k\right)}{(1 - \beta_2^k)} + \epsilon}} V_k (V_k^\top V_k)^{-1}, \\
            V_{k+1} &= V_k - \eta_k  \frac{\nicefrac{m^V_k U_k^\top}{(1 - \beta_1^k)}}{\sqrt{\nicefrac{p^V_k \left(U_k \ast U_k\right)}{(1 - \beta_2^k)} + \epsilon}} U_k (U_k^\top U_k)^{-1}.
        \end{aligned}
    \end{equation}
\end{remark}

\vspace{-0.2cm}
\subsection{Gradient Clipping and Weight Decay}
\label{sec:gradient_clipping_weight_decay}
\vspace{-0.2cm}

We apply no special modifications to weight decay. Since only one of $U$ or $V$ is updated at a time, the effect of standard weight decay correctly reduces the low-rank update as $UV^\top \rightarrow (1 - \lambda\eta)UV^\top$. The full AdamW-LoFT algorithm is provided in the appendix.
\emph{With all six building blocks described above, LoFT-AdamW recovers full fine-tuning when $r = \max\{m, n\}$ and $U_k$, $V_k$ are full-rank. To our knowledge, LoFT is the first low-rank adaptation method that provably recovers full fine-tuning.}

\begin{remark}{Gradient Clipping}{6}
  To approximate full fine-tuning during gradient clipping, when updating $U$, we use $\tilde{\nabla}_U f(W) V^\top = \nabla_W f(W) \mathcal{P}_W$ as the effective gradient for the corresponding layer $W$.
\end{remark}

\vspace{-0.2cm}
\subsection{Simulated Experiment}
\label{sec:simulated_experiment}
\vspace{-0.2cm}

\begin{wrapfigure}{r}{0.45\textwidth}
  \centering
  \vspace{-1em}
  \includegraphics[width=0.45\textwidth]{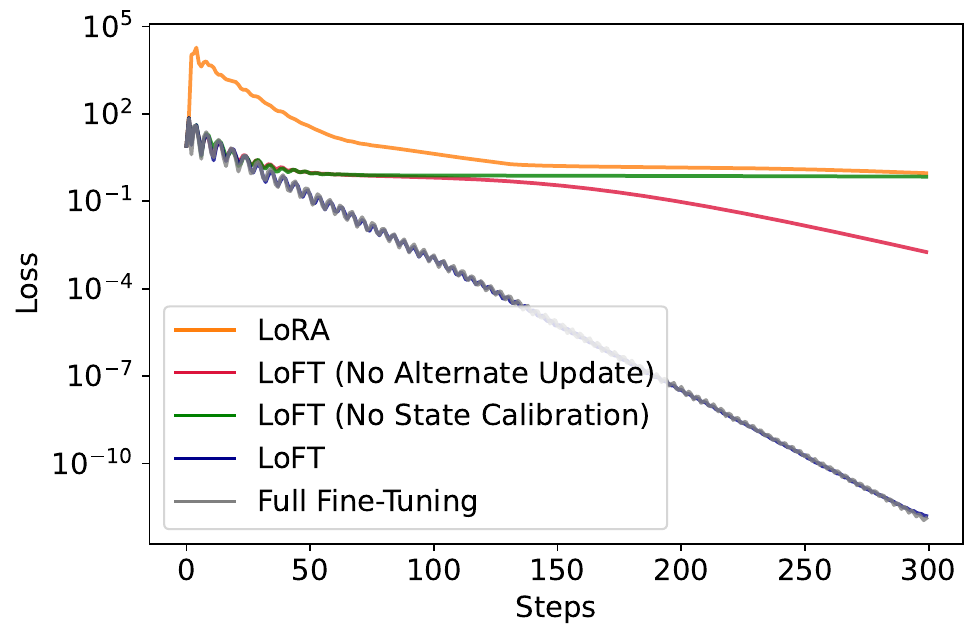}
  \vspace{-0.8cm}
  \caption{Comparison of LoRA, LoFT, and Full Fine-tuning with Adam on $f(W) = \|W - A\|_F^2$.}
  \label{fig:loft_vs_adamW}
  \vspace{-1.em}
\end{wrapfigure}
In the previous remark, we argued that for full-rank adaptation, LoFT recovers full fine-tuning. We now demonstrate that if the target solution is low-rank, LoFT matches the performance of full fine-tuning if the correct rank is selected.
We consider the optimization problem $f(W) = \|W - A\|_F^2$, where $A$ is a randomly generated matrix with $\mathrm{rank}(A) = r$. We compare LoFT, LoRA, and full fine-tuning using the AdamW optimizer. To demonstrate how LoFT can efficiently approximate full fine-tuning, the step size is tuned for full fine-tuning and reused for all baselines. We initialize $W = 0$, and for LoFT we follow the standard LoRA initialization~\citep{hu2022lora}, which also yields $UV^\top = 0$ initially.
We set $m = 1024$, $n = 512$, and $r = 8$. In addition to LoFT and LoRA, we also include ablated variants of LoFT to highlight the importance of its design components: one without alternating updates, and one without optimizer state calibration. 
As shown in Figure~\ref{fig:loft_vs_adamW}, LoFT closely matches the performance of full fine-tuning. In contrast, omitting any of its core components leads to significantly slower convergence and worse final performance, confirming the necessity of the full LoFT design.

\vspace{-0.3cm}
\section{Experiments}
\label{sec: experiments}
\vspace{-0.2cm}

We conduct extensive experiments across both language and vision domains to evaluate the effectiveness of our method. Our primary baselines include LoRA \citep{hu2022lora}, DoRA \citep{liu2024dora}, and full fine-tuning, and we apply these methods to a range of model backbones: LLaMA-7B \citep{touvron2023llama1}, LLaMA2-7B \citep{touvron2023llama}, LLaMA3-8B \citep{grattafiori2024llama}, LLaMA3.1-70B \citep{grattafiori2024llama}, and ViT-Base \citep{wu2020visual}. 
The evaluation spans two major fronts: (i) commonsense reasoning tasks in the language domain, and (ii) image classification tasks involving highly imbalanced and domain-specific datasets, including several medical imaging datasets and DomainNet. 
 We focus on LoRA and DoRA as our primary baselines since they are the most widely adopted and directly comparable PEFT methods, while results with additional baselines (namely full finetuning, rsLoRA~\citep{kalajdzievski2023rank}, AdaLoRA~\citep{zhang2023adaptive}, LoRA-Pro~\citep{wang2024lorapro}, LoRA-GA~\citep{wang2024lora}, LoRA$^{+}$~\citep{hayou2024lora+} ) are provided in Appendix~\ref{appendix: additional-baselines}.

In addition to the typical low-rank configuration (e.g., $\operatorname{rank}\geq 4$), we explore extremely constrained settings by reducing the rank to as low as $1$, demonstrating the robustness of our method under stringent parameter budgets. This allows us to highlight not just absolute performance but also the parameter efficiency and scalability of our approach relative to existing baselines. 
Further implementation and dataset details are provided in Appendix~\ref{appendix: impl-details}. For additional baselines, LoFT derivatives (LoFT (simple), which removes second-moment calibration to reduce memory and latency overhead, and quantized LoFT), as well as ablations, memory footprint, and latency analysis, please refer to Appendix~\ref{appendix: additional-exp-results}. 
Large-scale scaling results on LLaMA-3.1-70B are provided in Appendix~\ref{appendix: scaling-70b}.

\begin{table}[t]
    \centering
    \vspace{-0.2cm}
    \caption{Performance comparison of parameter-efficient fine-tuning methods, LoRA, DoRA, and our method LoFT, on a suite of commonsense reasoning benchmarks using LLaMA-7B, LLaMA2-7B, and LLaMA3-8B models. The table reports accuracy scores across multiple tasks with average performance shown in the final column. $r$ denotes the rank used in the respective adaptation method. \textbf{Bold} and \underline{underlined} scores highlight the best and second-best performance per task, respectively. } 
    \resizebox{\linewidth}{!}{
    \begin{tabular}{
    cl
    S[table-format=2.2, detect-weight]
    S[table-format=2.2, detect-weight]S[table-format=2.2, detect-weight]
    S[table-format=2.2, detect-weight]S[table-format=2.2, detect-weight]
    S[table-format=2.2, detect-weight]S[table-format=2.2, detect-weight]
    S[table-format=2.2, detect-weight]S[table-format=2.2, detect-weight]
    }
        \toprule 
         \rowcolor{lightblue} \textbf{Model} & \textbf{Method} & \textbf{BoolQ} & \textbf{PIQA} & \textbf{SIQA} & \textbf{HS} & \textbf{WG} & \textbf{ARC-C} & \textbf{ARC-E} & \textbf{OBQA} & \textbf{avg.} \\ \midrule 
         \multirow{7}{*}{LLaMA-7B} 
         & LoRA$_{r=16}$ & 65.38 & 76.71 & 75.69 & 79.81 & 68.03 & \bfseries 65.27 & \bfseries 80.30 & 77.40 & 73.57 \\
         & DoRA$_{r=16}$ & 54.13 & 73.94 & \bfseries 79.38 & 58.01 & \bfseries 79.40 & $\underline{64.68}$ & 79.76 & \bfseries 79.60 & 71.11 \\ \cmidrule{2-11}
         & LoFT$_{r=16}$ & \bfseries 68.62 & \bfseries 82.80 & $\underline{78.27}$ & \bfseries 82.69 & 73.32 & 64.30 & $\underline{80.26}$ & $\underline{78.40}$ & \cellcolor{lightgreen} \bfseries 76.08 \\ 
         & LoFT$_{r=4}$ & 67.34 & $\underline{80.96}$ & 76.20 & $\underline{80.50}$ & $\underline{76.40}$ & 63.62 & 79.21 & 75.40 & $\underline{74.95}$ \\ 
         & LoFT$_{r=2}$ & $\underline{68.03}$ & 79.16 & 75.84 & 78.86 & 76.24 & 64.51 & 78.03 & 71.00 & 73.96 \\ 
         & LoFT$_{r=1}$ & 67.09 & 78.35 & 74.46 & 76.14 & 74.82 & 58.87 & 76.85 & 70.80 & 72.17 \\ \midrule 
         \multirow{7}{*}{LLaMA2-7B} 
         & LoRA$_{r=16}$ & 50.09 & 59.03 & 76.41 & 65.45 & 77.51 & 64.68 & 79.12 & 77.20 & 68.69 \\ 
         & DoRA$_{r=16}$ & \bfseries 71.93 & $\underline{82.92}$ & $\underline{79.22}$ & $\underline{88.90}$ & \bfseries 83.03 & 66.98 & 82.70 & \bfseries 82.00 & $\underline{79.71}$ \\ \cmidrule{2-11}
         & LoFT$_{r=16}$ & $\underline{71.80}$ & \bfseries 83.51 & 79.02 & \bfseries 90.59 & $\underline{82.72}$ & \bfseries 70.65 & $\underline{84.43}$ & $\underline{81.00}$ & \cellcolor{lightgreen} \bfseries 80.46 \\ 
         & LoFT$_{r=4}$ & 70.49 & 81.94 & \bfseries 79.80 & 88.85 & 81.37 & $\underline{69.11}$ & \bfseries 84.88 & 79.80 & 79.53 \\ 
         & LoFT$_{r=2}$ & 70.55 & 81.18 & 77.74 & 83.01 & 79.01 & 66.72 & 82.83 & 78.80 & 77.48 \\ 
         & LoFT$_{r=1}$ & 68.69 & 80.58 & 76.36 & 72.95 & 76.80 & 64.08 & 82.37 & 77.20 & 74.88 \\ 
         \midrule 
         \multirow{7}{*}{LLaMA3-8B} 
         & LoRA$_{r=16}$ & 74.46 & 88.14 & \bfseries 81.37 & 94.81 & 85.08 & $\underline{80.72}$ & 89.18 & $\underline{86.00}$ & $\underline{84.97}$ \\ 
         & DoRA$_{r=16}$ & $\underline{74.56}$ & $\underline{88.52}$ & 80.09 & 95.17 & \bfseries 86.74 & 79.78 & $\underline{90.19}$ & 84.60 & 84.96 \\ \cmidrule{2-11}
         & LoFT$_{r=16}$ & \bfseries 75.63 & \bfseries 88.85 & $\underline{80.35}$ & \bfseries 95.64 & $\underline{86.11}$ & \bfseries 80.89 & \bfseries 91.16 & \bfseries 86.40 & \cellcolor{lightgreen} \bfseries 85.63 \\ 
         & LoFT$_{r=4}$ & 74.53 & $\underline{88.52}$ & 80.04 & $\underline{95.45}$ & 85.32 & 78.92 & 89.73 & 84.20 & 84.59 \\ 
         & LoFT$_{r=2}$ & 73.76 & 87.11 & 79.84 & 94.72 & 84.29 & 79.61 & 89.98 & 84.60 & 84.24 \\ 
         & LoFT$_{r=1}$ & 69.33 & 87.49 & 79.27 & 93.79 & 84.06 & 76.11 & 87.12 & 82.20 & 82.42 \\ 
         \bottomrule
    \end{tabular}
    }
    \label{tab: main-commonsence-reasoning}
    \vspace{-1.em}
\end{table}

\vspace{-0.2cm}
\subsection{Commonsense Reasoning}
\label{sec:exp_commonse}
\vspace{-0.1cm}

\paragraph{Setup.} To evaluate the efficacy of LoFT in the language domain, we conduct experiments on a suite of commonsense reasoning benchmarks, including BoolQ, PIQA, SIQA, HellaSwag (HS), Winogrande (WG), ARC-Challenge (ARC-C), ARC-Easy (ARC-E), and OpenBookQA (OBQA). We fine-tune three prominent large language models, LLaMA-7B \citep{touvron2023llama1}, LLaMA2-7B \citep{touvron2023llama}, and LLaMA3-8B \citep{grattafiori2024llama}, using parameter-efficient methods: LoRA, DoRA, and our proposed LoFT, each evaluated at multiple rank settings, notably including very low ranks (e.g., $1, 2, 4$). 
Following the setting of \cite{hu-etal-2023-llm}, we combine the training sets from all eight benchmarks into a single unified training dataset, and then conduct evaluation separately on each task's official test set. This unified training strategy enables more stable fine-tuning and fairer comparisons across tasks and adaptation methods.

\textbf{Overall Performance Results.} As shown in Table~\ref{tab: main-commonsence-reasoning}, LoFT consistently achieves superior performance across all model scales and rank configurations. For LLaMA-7B, LoFT at rank 16 achieves the highest average accuracy of $76.08\%$, outperforming both LoRA ($73.57\%$) and DoRA ($71.11\%$) by notable margins. Even at lower ranks, LoFT maintains strong performance, only a $1.1\%$ drop at rank 4 and $3.9\%$ at rank 1, demonstrating its robustness in extremely low-rank regimes. 
The trend continues for LLaMA2-7B, where LoFT at rank 16 reaches an average accuracy of $80.46\%$, surpassing LoRA by $11.7\%$ and slightly edging out DoRA. Remarkably, LoFT remains highly competitive down to rank 1, scoring $74.88\%$, which still outperforms LoRA by a significant margin. 
For the largest model, LLaMA3-8B, LoFT achieves the highest average accuracy of $85.63\%$ at rank 16. The gains over LoRA and DoRA are less dramatic, but LoFT's performance remains consistently on top. Importantly, the drop-off in performance with decreasing rank is significantly more graceful for LoFT.

\begin{figure}[t]
    \centering
    \includegraphics[width=.8\linewidth]{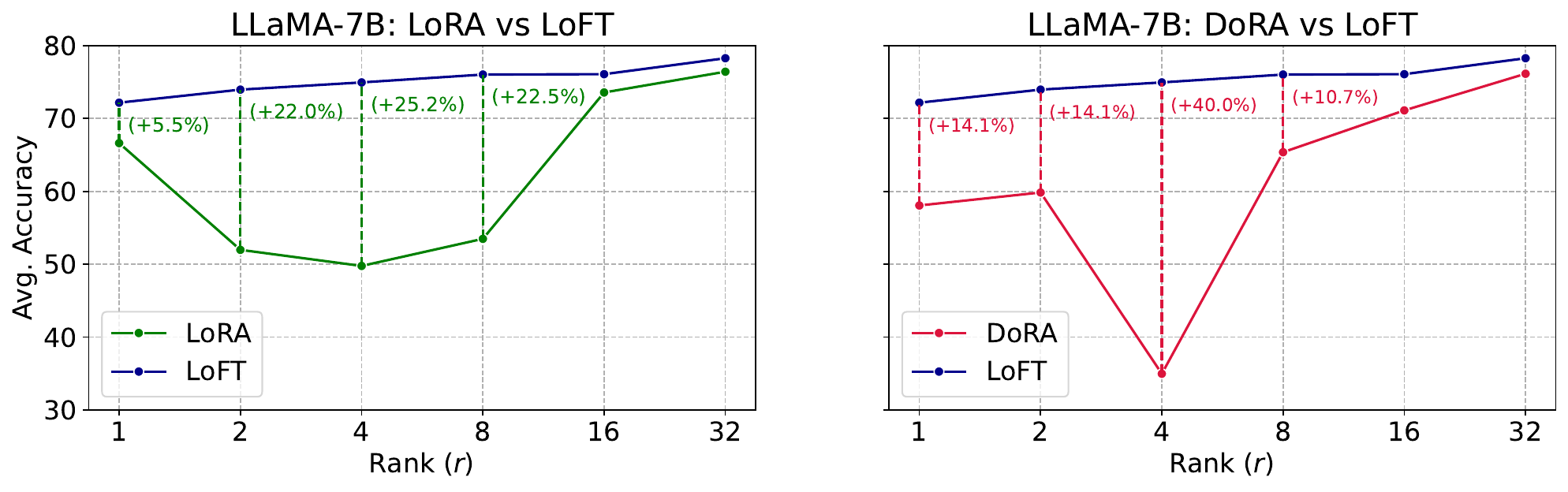}
    \vspace{-0.3cm}
    \caption{Rank-wise comparison of LoFT against LoRA (left) and DoRA (right) on LLaMA-7B across commonsense reasoning tasks. LoFT maintains significantly higher accuracy, especially at low ranks. Percentage gains denote improvement of LoFT over the respective baseline at each rank.} 
    \label{fig: all-ranks-comparison} 
    \vspace{-0.2cm}
\end{figure}

\begin{figure}[t]
    \vspace{-0.2cm}
    \centering
    \includegraphics[width=.8\linewidth]{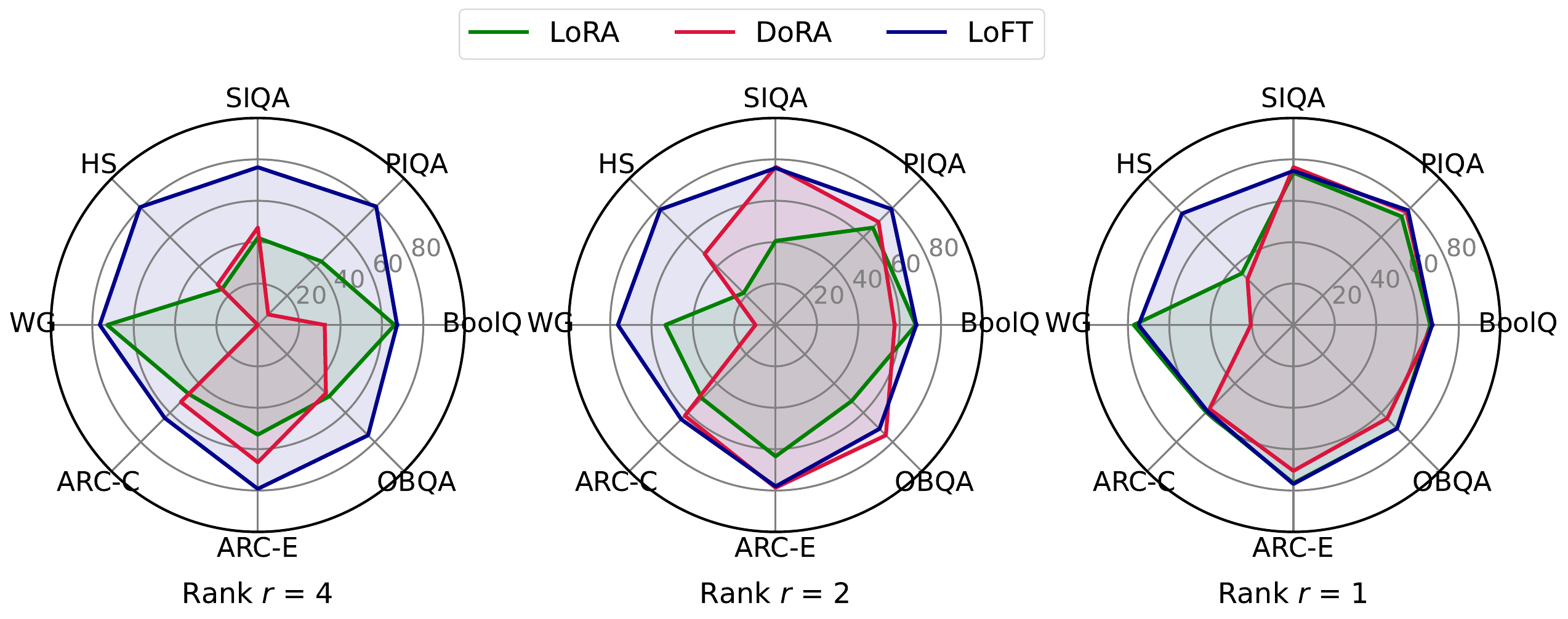}
    \vspace{-0.3cm}
    \caption{Task-wise performance comparison across LoRA (green), DoRA (red), and LoFT (blue) at lower ranks $(r=\{4,2,1\})$ on LLaMA-7B. LoFT maintains high performance across all tasks, even under extreme compression, unlike baselines that degrade sharply on several benchmarks.} 
    \label{fig: radar-plots}
    \vspace{-1.5em} 
\end{figure}

\textbf{Rank-Wise Comparison.} To better illustrate LoFT's robustness and performance scalability, we present a rank-wise comparison in Figure~\ref{fig: all-ranks-comparison}. The left panel compares LoFT against LoRA, and the right panel compares it against DoRA, both on LLaMA-7B. We observe that LoFT consistently outperforms both baselines across all rank settings, but the gap becomes especially pronounced at low ranks. Notably, at rank 4, LoFT surpasses DoRA by an impressive $+40\%$ and LoRA by $+25\%$, highlighting LoFT's extreme efficiency in constrained settings. 

Interestingly, while LoRA and DoRA both suffer steep accuracy drops at lower ranks, LoFT exhibits a much flatter accuracy curve, showing that it retains high performance even with minimal trainable parameters. This makes LoFT particularly appealing for low-resource deployment scenarios. 

These results validate two important properties of our method: \emph{(i)~LoFT matches or exceeds the performance of existing PEFT methods even at high capacity ($r=16$)}, and \emph{(ii)~it remains highly effective at extremely low ranks}, highlighting its efficiency and applicability in constrained settings. Overall, LoFT achieves the best balance between accuracy and parameter count across diverse commonsense reasoning tasks, while using the same number of parameters as LoRA. 

\textbf{Task-Specific Analysis at Low Ranks.} To further analyze performance under parameter-constrained settings, we examine how LoRA, DoRA, and LoFT behave across individual tasks at lower ranks $r =\{4, 2, 1\}$ using LLaMA-7B. Figure~\ref{fig: radar-plots} shows radar plots for all eight commonsense reasoning benchmarks at each of these low ranks. These visualizations reveal that while LoRA and DoRA suffer inconsistent and often sharp performance drops across tasks, LoFT maintains stable and competitive accuracy across the board. 

In particular, DoRA shows substantial instability at ranks 4 and 2, with near zero scores on certain tasks such as WinoGrande, whereas LoRA suffers large dips on more complex tasks like HellaSwag and SIQA. In contrast, LoFT retains high task-wise accuracy, especially on harder benchmarks (e.g., HellaSwag, ARC-C), even at rank 1, demonstrating its robust generalization when adaptation budgets are extremely constrained. For a comprehensive view of the exact numerical breakdowns per task and rank, we refer readers to the appendix.

\vspace{-0.2cm}
\subsection{Image Classification} 
\label{sec: exp-img-classification}
To assess the generality of our approach beyond the language domain, we evaluate it on image classification tasks using the ViT-Base model \citep{wu2020visual} pretrained on ImageNet-21K \citep{deng2009imagenet}. Vision models are known to be more sensitive to low-rank constraints, often requiring higher intrinsic ranks to preserve performance. Therefore, we restrict our analysis to ranks $r \geq 4$ and focus on whether LoFT can match or exceed strong baselines under these challenging constraints. 

We conduct experiments on four diverse and challenging datasets: 
\textbf{ISIC2019} \citep{codella2019skin} and \textbf{HAM10000} \citep{tschandl2018ham10000}: medical skin lesion classification datasets with long-tailed label distributions;     \textbf{Diabetic Retinopathy} \citep{graham2015diabetic}: a medical imaging dataset with ordinal severity levels, and \textbf{DomainNet} \citep{peng2019domainnet}: a large-scale highly skewed benchmark. 

\begin{table}[t]
    \centering
    \vspace{-0.2cm}
    \caption{Comparison of parameter-efficient fine-tuning methods on image classification benchmarks using the ViT-Base model. We evaluate full fine-tuning (Full FT), LoRA, DoRA, and our proposed method, LoFT, across four datasets: ISIC2019, HAM10000, Diabetic Retinopathy, and DomainNet. Accuracy (mean $\pm$ standard deviation) is reported for each setting.} 
    \vspace{-0.2cm}
    \resizebox{0.85\linewidth}{!}{
        \begin{tabular}{cl
            rrrr
            c}
            \toprule
            \rowcolor{lightblue} \textbf{Model} & \textbf{Method} & \textbf{ISIC2019} & \textbf{HAM10000} & \parbox[c]{2cm}{\centering\textbf{Diabetic\\Retinopathy}} & \textbf{DomainNet} & \textbf{avg.} \\ \midrule
            \multirow{7}{*}{ViT-Base} 
            & Full FT       & $80.69 \pm 0.18$ & $\mathbf{93.22} \pm 0.64$ & $56.07 \pm 0.23$ & $\mathbf{73.46} \pm 1.20$ & $\underline{75.86}$ \\ \cmidrule{2-7} 
            & LoRA$_{r=16}$ & $\underline{81.02} \pm 1.10$ & $91.56 \pm 0.66$ & $57.87 \pm 0.43$ & $71.39 \pm 0.10$ & $75.46$ \\ \cmidrule{2-7} 
            & DoRA$_{r=16}$ & $80.35 \pm 0.17$ & $90.78 \pm 0.81$ & $57.66 \pm 0.56$ & $70.18 \pm 2.02$ & $74.74$ \\ \cmidrule{2-7} 
            & LoFT$_{r=16}$ & $\mathbf{81.06} \pm 0.13$ & $\underline{93.13} \pm 0.28$ & $\mathbf{58.33} \pm 0.19$ & $\underline{71.97} \pm 0.16$ & $\mathbf{76.12}$ \\
            & LoFT$_{r=8}$  & $80.36 \pm 0.21$ & $91.78 \pm 0.68$ & $\underline{57.89} \pm 0.48$ & $70.11 \pm 0.77$ & $75.04$ \\
            & LoFT$_{r=4}$  & $79.31 \pm 0.36$ & $91.45 \pm 0.73$ & $56.98 \pm 0.27$ & $69.32 \pm 0.55$ & $74.27$ \\ \bottomrule 
        \end{tabular}
    }
    \vspace{-1.5em}
    \label{tab: image-classification} 
\end{table}

We compare our method (LoFT) against full fine-tuning (Full FT), LoRA, and DoRA using a consistent configuration (rank $r=16$ unless specified otherwise). For each dataset, we report the mean and standard deviation over three runs. 

\begin{wrapfigure}{r}{0.45\textwidth}
  \centering
  \vspace{-1.5em}
  \includegraphics[width=0.4\textwidth]{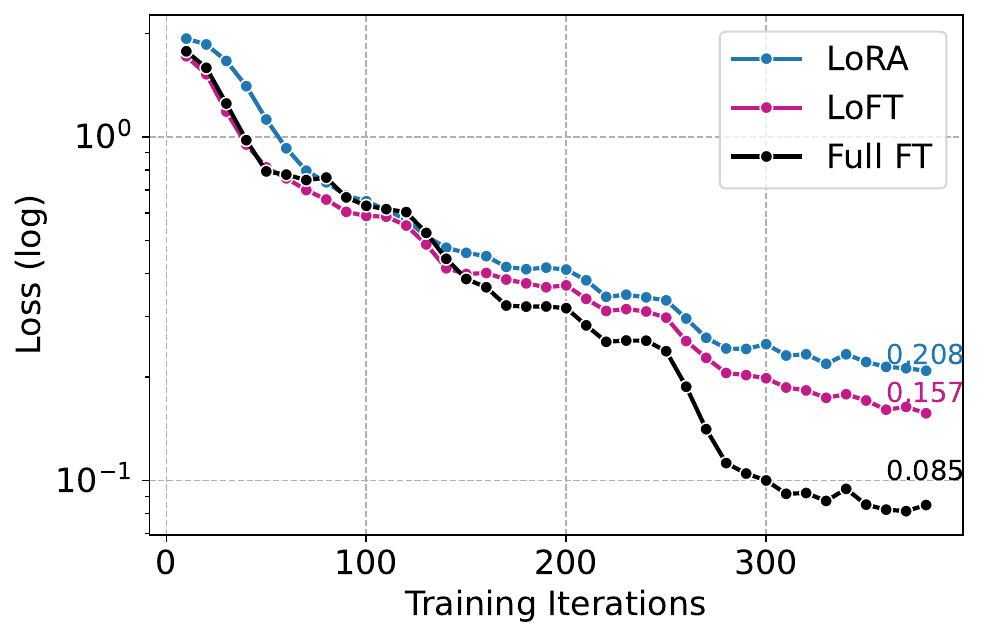} 
  \vspace{-0.3cm}
  \caption{Training log. loss on HAM10000.} 
  \label{fig: ham10k-training-curve}
  \vspace{-1em}
\end{wrapfigure}

As shown on Table~\ref{tab: image-classification}, LoFT at rank 16 achieves the highest average accuracy $76.12\%$, outperforming both LoRA ($75.46\%$) and DoRA ($74.74\%$), and even slightly surpassing full fine-tuning ($75.86\%$). LoFT also achieves the top score on two of four individual datasets, including ISIC2019 and Diabetic Retinopathy. Notably, LoRA performs competitively on ISIC2019 but exhibits degraded performance on HAM10000 and DomainNet, suggesting it may struggle with skewed datasets. DoRA generally underperforms across datasets, indicating instability in visual domains with skewed/out-of-domain datasets. In contrast, LoFT maintains strong performance, even when the rank is reduced to 8 and 4, with only a 2-point drop in average accuracy at rank 4, further reinforcing its resilience to low-rank degradation in vision tasks.

In addition to the final accuracy gains reported in Table~\ref{tab: image-classification}, we also present the training dynamics on HAM10000 in Figure~\ref{fig: ham10k-training-curve}. Remarkably, LoFT’s training loss curve closely overlaps with that of full fine-tuning from the very first iterations, indicating that our updates follow the same optimization trajectory as Full FT right from the start. In contrast, LoRA starts with a noticeably higher loss and converges more slowly, never fully matching Full FT’s initial descent. This early alignment between LoFT and Full FT demonstrates that, despite updating far fewer parameters, LoFT preserves the model’s capacity to adapt rapidly.

Throughout the remainder of training, LoFT maintains a small gap behind Full FT, which we attribute to the growing rank of the full fine-tuning solution, as explained by greedy low-rank learning theory~\citep{li2021towards}. Nevertheless, LoFT significantly outperforms LoRA across the full training trajectory. Interestingly, LoFT ultimately achieves better final performance than full fine-tuning, suggesting that Full FT may overfit, whereas LoFT benefits from implicit regularization due to the low-rank structure of its updates.

\section{Related Work}

\textbf{Parameter-Efficient Fine-Tuning.} As aforementioned, the advent of Large Language Models has exploded the computational and memory requirements of running neural workloads, at training and inference time, thus limiting running such tasks to a few players. Towards this end, a significant amount of research has focused on efficient ways of fine-tuning LLMs for downstream tasks. Parameter Efficient Fine-Tuning (PEFT) collectively refers to techniques that only tune a small number of parameters towards the optimization objective. Such methods take various shapes, ranging from token-level (i.e.,~prompt-tuning)~\citep{lester2021power} and intermediate state parameters (i.e.,~prefix-tuning)~\citep{li2021prefix} to block-level parameters interspersed in the transformer block, either sequentially~\citep{houlsby2019parameter,pfeiffer2021adapterfusion} or in parallel~\citep{he2022towards}.

\noindent
\textbf{Low-Rank Adaptation.} Closer to our method, LoRA~\citep{hu2022lora} introduces low-rank adapters parallel to the attention and linear layers of the transformer block, which build upon the assumption that the changes in model weights during adaptation exhibit a low-rank structure and thus reparametrize updated weights as such. 
While seminal, LoRA often falls short of the full fine-tuning potential of the model. Subsequent work has tried to tackle this in various ways. 
Specifically, DoRA~\citep{liu2024dora} decomposes the model weights into their directional and magnitude components and fine-tunes both, but only the former remains low-rank. Similar in nature is DeLoRA~\citep{binidecoupling}, decouples the direction and strength of low-rank weight updates via normalization and learnable scaling. On the contrary, \citet{zhuasymmetry} note the distinct function of $A$ and $B$ low-rank matrices and propose training only the latter for efficiency, while~\citet{hayou2024lora+} adopts different learning rates for each matrix.
LoRA-Pro~\citep{wang2024lorapro} shows the equivalence of low-rank adaptation and low-rank gradient and enhances LoRA by minimizing the distance between the true gradient and the low-rank matrices A and B in closed form. 
\citet{zhang2024riemannian}~introduce a Riemannian preconditioner to enhance the stability and efficiency of LoRA with SGD and AdamW optimizers across tasks.
PiSSA~\citep{meng2024pissa}, on the other hand, pinpoints the issue with the initialization of LoRA matrices and proposes SVD decomposition and freezing only the residual components of the weights. 
All of the above methods attempt to more faithfully approximate the gradients in the low-rank subspace and close the performance gap of LoRA with full fine-tuning.
Contrary to prior work, our primary goal focuses on the optimization dynamics of low-rank models and aligning the optimizer state to full fine-tuning. By doing so, we are able to get state-of-the-art results without sacrificing accuracy or efficiency.
Concurrently with our work, AltLoRA~\citep{yu2025altlora} proposes an alternating optimization scheme for LoRA that mitigates second-order coupling effects via alternating projections.\footnote{AltLoRA and our work were developed independently and appeared on arXiv within days of each other.} Both approaches share the central idea of alternating updates to alleviate coupling effects in low-rank parameterizations (i.e., alternating optimization constitutes the common core). However, they differ substantially in scope and objective. AltLoRA primarily focuses on improving gradient approximation within the low-rank subspace and establishes recovery of SGD with momentum. In contrast, our method extends beyond gradient alignment to match the full optimizer dynamics, aligning not only gradients but also all internal optimizer moments. This enables exact recovery of full AdamW dynamics in the low-rank regime and provable reduction to full fine-tuning in the full-rank limit. Moreover, we introduce projected full-update reconstruction and moment calibration matrices, and eliminate the need for the LoRA scaling hyperparameter. To our knowledge, our method is the first to establish transformation invariance under full AdamW dynamics, encompassing first- and second-moment alignment, gradient clipping, and weight decay.

\noindent
\textbf{More efficient LoRA.} While low-rank adaptation significantly drops the computational and memory requirements of training large-scale LLMs, it still can require a significant amount of resources, especially in constrained edge or cross-device federated learning settings~\citep{cho2024heterogeneous}. Towards this end, several approaches further optimize low-rank adaptation to minimize the overhead. Specifically, VeRA~\citep{kopiczkovera} proposes freezing shared random low-rank matrices and only training scaling vectors. 
LoRA-xs~\citep{balazy2024lora} freezes SVD-initialized low-rank matrices and only trains a small $r\times r$ matrix for adaptation. Last, LoRA-SB~\citep{ponkshe2024initialization} more carefully initializes the low-rank matrices to more faithfully approximate the full fine-tuning gradient directions during adaptation. Contrary to such approaches, LoFT can scale to truly low ranks by careful tuning of the optimization process, rather than altering the adaptation modeling.

\section{Conclusion}

In this work, we have presented LoFT, a low-rank adaptation framework that aligns the optimizer's internal dynamics to full fine-tuning by means of alternating LoRA updates, gradient projection and scaling, first and second moment calibration, and gradient clipping approximations.
These mechanisms enable significant performance and efficiency gains with minimal loss in accuracy across different tasks and model sizes and pave the way for training even more efficiently for downstream tasks. 
Towards this end, we plan to explore the interplay between our LoFT and quantization to further boost efficiency and sustainability in training, as well as how it can be combined with noisy Differential Privacy updates, which can enable distributed private training at scale.

\bibliography{iclr2026_conference}
\bibliographystyle{iclr2026_conference}

\clearpage 
\appendix
\addcontentsline{toc}{section}{Appendix} 
\part{} 
\parttoc 

\clearpage 
\section{Limitations}
\label{app:limitations}

In this paper, we have proposed a technique for producing parameter-efficient fine-tuning via low-rank adaptation that behaves like full fine-tuning. 
While LoFT offers significant efficiency gains compared to full fine-tuning and DoRA, it has increased memory consumption compared to LoRA (due to storing previous iterates $V_{k-1}, U_{k-1}$), to the benefit of increased accuracy (see Section \ref{appendix: memory-eff}). Having said that, even for the same memory footprint of LoRA, LoFT is able to achieve better downstream accuracy.

Lastly, we applied our technique on top of the AdamW optimizer, as it is the most widely used in LLM optimization. We leave applications to other optimizers, like Muon, for future work.

\section{Theoretical Properties of LoFT for Matrix Factorization}
\label{app:loft_theory}

In Section~\ref{sec:gradient_clipping_weight_decay}, we argue that when $UV^\top$ is of full rank, then LoFT recovers full fine-tuning. Furthermore, for the matrix factorization problem, we showed that if the true solution is of low rank, then LoFT also empirically recovers full fine-tuning. In this section, we further extend these results. In particular, we focus on the matrix factorization problem
\begin{align}
    \label{eq:matrix_factorization}
    \min_{U \in \R^{m \times r}, V \in \R^{n \times r}} \left\{ f(U,V) \eqdef \frac12\|UV^\top - A\|_F^2\right\}.
\end{align}
Let $A = \tilde{U} \Sigma \tilde{V}^\top$ be the SVD decomposition of $A$. Then, by the Eckart-Young theorem, we have that every solution of \eqref{eq:matrix_factorization} has the following form: 
\begin{align*}
    U^\star &= \tilde{U}_r \Sigma_r Q, \\
    V^\star &= \tilde{V}_r \left(Q^{-1}\right)^{T},
\end{align*}
where $\tilde{U}_r, \Sigma_r, \tilde{V}_r$ contain the first $r$ singular vectors of $A$ and $Q \in \R^{r \times r}$ is a full rank matrix. In the next lemma, we show that if $U$ and $V$ start in the correct space, then LoFT applied to gradient descent with momentum recovers full fine-tuning with momentum. 

\begin{lemma}
    Let $U_0 = \tilde{U}_r X_0$ and $V_0 = \tilde{V}_r Y_0$, where $X_0, Y_0 \in \R^{r \times r}$ are full rank matrices. Then, LoFT-GD with momentum applied to the matrix factorization problem exactly recovers GD with momentum applied to $f(W) =  \frac12\|W - A\|_F^2$ initialized at $W_0 = U_0 V_0^\top$.
\end{lemma}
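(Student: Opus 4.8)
The plan is to argue by induction on the iteration $k$ that LoFT maintains a subspace invariant under which its induced update on the product $W_k = U_k V_k^\top$ coincides \emph{exactly}, iteration by iteration, with the full-model GD-with-momentum update. Throughout I use $\nabla_W f(W) = W - A$ for $f(W) = \tfrac12\|W-A\|_F^2$, and, as in the simulated experiment, take $\mathrm{rank}(A) = r$ so that $A = \tilde U_r \Sigma_r \tilde V_r^\top$; this is precisely the regime in which the top-$r$ subspace is an exact invariant for full fine-tuning too. First I would show by induction that $U_k = \tilde U_r X_k$ and $V_k = \tilde V_r Y_k$ with $X_k, Y_k \in \R^{r\times r}$ full rank, hence $W_k = \tilde U_r Z_k \tilde V_r^\top$ with $Z_k = X_k Y_k^\top$, and that both stored moments remain in the respective subspaces, $m^U_k \in \mathrm{col}(\tilde U_r)$ and $m^V_k \in \mathrm{col}(\tilde V_r)$. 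The base case follows from the hypotheses and zero initial moments, giving $W_0 = \tilde U_r (X_0 Y_0^\top) \tilde V_r^\top = U_0 V_0^\top$. For the inductive step, the $U$-update in \eqref{eq:u_v_update} without the second-moment denominator collapses to $U_{k+1} = U_k - \eta_k m^U_k$ (because $V_k^\top V_k (V_k^\top V_k)^{-1} = I_r$), which stays in $\mathrm{col}(\tilde U_r)$; the $V$-side is symmetric. The only genuine hypothesis used here is non-degeneracy, keeping $X_k, Y_k$ full rank so the inverses are well defined (pseudo-inverses otherwise, as in the paper).

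\textbf{Step 2 (the sequential projection is a no-op on the gradient).} The crucial observation is that $\mathcal{P}_{V_j} = V_j (V_j^\top V_j)^{-1} V_j^\top$ depends only on $\mathrm{col}(V_j) = \mathrm{col}(\tilde V_r)$, not on the reparametrization $Y_j$, so $\mathcal{P}_{V_j} = \tilde V_r \tilde V_r^\top$ for all $j$ and, by idempotence, $\prod_{j=i}^k \mathcal{P}_{V_j} = \tilde V_r \tilde V_r^\top$. Since $A = \tilde U_r \Sigma_r \tilde V_r^\top$ and $W_i = \tilde U_r Z_i \tilde V_r^\top$, the gradient $\nabla_W f(W_i) = \tilde U_r (Z_i - \Sigma_r) \tilde V_r^\top$ already lies in the subspace, so $\nabla_W f(W_i) \tilde V_r \tilde V_r^\top = \nabla_W f(W_i)$. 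Plugging this into the calibrated-momentum identity \eqref{eq:mom_lora_W_U} yields $g^V_i = \nabla_W f(W_i)$ and hence $\tilde m^U_k = (1-\beta_1)\sum_{i=0}^k \beta_1^{k-i} \nabla_W f(W_i) = m^W_k$, the full-model momentum of \eqref{eq:mom_ft}; the mirror computation gives $(\tilde m^V_k)^\top = m^W_k$.

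\textbf{Step 3 (matching the trajectory and closing the induction).} Finally I would match the $W$-steps. The delicate point is that LoFT carries two moments while full fine-tuning carries one: the argument requires that both $m^U_k$ and $m^V_k$ accumulate the shared gradient $\nabla_W f(W_k)$ at \emph{every} iteration (only the applied parameter step alternates), so that by Step 2 both reconstruct the same $m^W_k$. Granting this, whichever factor is updated at step $k$ the induced update is $W_{k+1} = W_k - \eta_k \tilde m^U_k = W_k - \eta_k m^W_k$ (or the transpose identity through $\tilde m^V_k$ when $V$ is updated), which is exactly the full GD-with-momentum step; this also re-establishes $W_{k+1} = \tilde U_r Z_{k+1} \tilde V_r^\top$, closing the induction and showing the two trajectories coincide for all $k$.

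\textbf{Main obstacle.} I expect the real obstacle to be the bookkeeping in Step 3 rather than any hard estimate. One must verify that the calibration matrices $C^V_k, C^U_k$ (which equal the identity at steps where a factor is held fixed and otherwise precisely undo the change of basis) make both stored moments faithfully track the single full-model moment $m^W_k$; a naive strict-alternating variant, in which each moment accumulates gradients only at its own update steps, drops cross terms such as $\beta_1 \nabla_W f(W_0)$ and fails to reproduce full fine-tuning, so the proof genuinely hinges on updating both moments every iteration. The other step that looks substantive, the collapse of $\prod_{j=i}^k \mathcal{P}_{V_j}$ to an identity action on the gradient, is exactly where $\mathrm{rank}(A) = r$ is used and is otherwise a one-line consequence of idempotence.
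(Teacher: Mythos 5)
Your proposal is correct and follows essentially the same route as the paper's proof: establish by induction that $U_k=\tilde U_r X_k$, $V_k=\tilde V_r Y_k$, observe that the gradients $\nabla_W f(W_i)$ then live in the fixed left/right subspaces so the sequential projections act as the identity and $\tilde m^U_k=\tilde m^V_k=m^W_k$, and conclude that the induced $W$-step matches full GD with momentum. The only difference is that you make explicit the assumption $A=\tilde U_r\Sigma_r\tilde V_r^\top$ (i.e., $\operatorname{rank}(A)\le r$), which the paper's proof uses implicitly when it writes $\nabla_W f(W_0)=\tilde U_r(X_0Y_0^\top-\Sigma_r)\tilde V_r^\top$ — a point in your favor, since without it the unconstrained GD trajectory would leave the rank-$r$ subspace and the claimed equivalence would fail.
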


\begin{proof}
    The gradient of $f(W)$ with respect to $W$ has the following form: 
    \begin{align*}
        \nabla_W f(W_0) = W_0 - A = \tilde{U}_r \left(X_0Y_0^\top - \Sigma_r \right) \tilde{V}^\top_r.
    \end{align*}
    The left and right spaces correspond to $\tilde{U}_r$ and $ \tilde{V}_r$, respectively. Using \eqref{eq:gd_lora_4} and \eqref{eq:mom_lora_W_U}, we get
    \begin{align*}
        g_0^V = g_0^U = \nabla_W f(W_0) \text{ and }
        \tilde{m}_0^V = \tilde{m}_0^U = m_0 = \nabla_W f(W_0).
    \end{align*}
    Since momentum is also the update, we have by induction that $\forall k \geq 0$, $U_k = \tilde{U}_r X_k$ and $V_k = \tilde{V}_r Y_k$,  where $X_k, Y_k \in \R^{r \times r}$. Therefore,
    \begin{align*}
        g_k^V &= g_k^U = \nabla_W f(W_k), \text{ and } \\ 
        \tilde{m}_k^V &= \tilde{m}_k^U = m_k = (1 - \beta_1)\sum_{i=0}^k \beta_1^{k-i} \nabla_W f(W_i).
    \end{align*}
\end{proof}

One interesting consequence of the above lemma is that if we apply LoFT with step size $1$ with the initialization in the correct space, LoFT finds the optimal solution in a single step. Notice that without scaling, the smoothness constant of \eqref{eq:matrix_factorization} with respect to both optimization variables can be unbounded, since
\begin{align*}
    \|\nabla_U f(U_1, V) - \nabla_Uf(U_2, V)\|_F &= \|(U_1V^T - A)V - (U_2V^T - A)V\|_F \\
    &= \|(U_1- U_2)V^\top V\|_F
\end{align*}
can be unbounded as $\|V\|_F$ can be unbounded. In practice, we would need to restrict $\|U\|_F$ and $\|V\|_F$ to guarantee smoothness. On the other hand, LoFT scaled version of the gradient satisfies: 
\begin{align*}
    \|\tilde{\nabla}_U f(U_1, V) - \tilde{\nabla}_U f(U_2, V)\|_F &= \|(U_1V^T - A)V\textcolor{red}{\left(V^\top V\right)^{-1}} - (U_2V^T - A)V\textcolor{red}{\left(V^\top V\right)^{-1}}\|_F \\
    &= \|(U_1- U_2)V^\top V \textcolor{red}{\left(V^\top V\right)^{-1}}\|_F \\ 
    &= 1 \|U_1- U_2\|_F .
\end{align*}
Therefore, LoFT gradients are smooth with the smoothness constant $1$ without any restrictions. 
The above highlights another desirable property of LoFT introduced in the following lemma. 

\begin{lemma}
    LoFT-GD with step size $\eta = 1$ applied to the matrix factorization \eqref{eq:matrix_factorization} corresponds to the Alternating Least Squares algorithm. 
\end{lemma}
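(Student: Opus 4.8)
The plan is to show that LoFT-GD with $\eta = 1$, when applied to the matrix factorization problem \eqref{eq:matrix_factorization}, reproduces exactly the update rule of Alternating Least Squares (ALS). Recall that ALS solves \eqref{eq:matrix_factorization} by alternately fixing one factor and minimizing the quadratic objective over the other. Fixing $V$, the optimal $U$ is the solution of the least-squares problem $\min_U \frac12\|UV^\top - A\|_F^2$, whose closed form is $U^\star = A V (V^\top V)^{-1}$; symmetrically, fixing $U$ gives $V^\star = A^\top U (U^\top U)^{-1}$. So the entire argument reduces to verifying that a single LoFT-GD step with unit step size lands exactly on this minimizer.

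First I would write out the LoFT-GD update for $U$ using the scaled gradient from Building Block~(\ref{bb:2}), namely $\tilde{\nabla}_U f(W) = \nabla_W f(W) V (V^\top V)^{-1}$. Since we are doing plain gradient descent (no momentum, or equivalently $\beta_1 = 0$), the update is simply $U^+ = U - \eta\, \tilde{\nabla}_U f(W)$. Substituting $\nabla_W f(W) = W - A = UV^\top - A$ and setting $\eta = 1$, I would compute
\begin{align*}
    U^+ &= U - (UV^\top - A)V(V^\top V)^{-1} \\
    &= U - UV^\top V (V^\top V)^{-1} + A V (V^\top V)^{-1} \\
    &= U - U + A V (V^\top V)^{-1} \\
    &= A V (V^\top V)^{-1},
\end{align*}
where the key cancellation $U V^\top V (V^\top V)^{-1} = U$ is exactly what the scaling matrix $(V^\top V)^{-1}$ is designed to produce. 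This is precisely the ALS minimizer for $U$ given $V$. By the symmetric argument, when $V$ is updated with $U$ fixed, the LoFT-GD step yields $V^+ = A^\top U (U^\top U)^{-1}$, the ALS minimizer for $V$.

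I would then invoke Building Block~(\ref{bb:1}), \emph{Alternating Updates}, which guarantees that $U$ and $V$ are never updated simultaneously. This means each LoFT-GD step modifies exactly one factor while holding the other fixed, which is structurally identical to one half-iteration of ALS. Combining this with the per-factor computation above establishes that the two algorithms produce identical iterates, completing the proof. The main thing to be careful about is the full-rank assumption on $V$ (respectively $U$): the inverse $(V^\top V)^{-1}$ must exist, which is already assumed in the footnote accompanying Building Block~(\ref{bb:2}) (otherwise the pseudo-inverse is used, in which case the cancellation still holds on the appropriate subspace). I expect the only genuine subtlety to be confirming that the unit-step-size cancellation is exact rather than approximate; this hinges entirely on the quadratic structure of the matrix factorization objective, which makes the gradient affine in $U$ and hence makes a single Newton-like step with the right preconditioner exact. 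There is no real obstacle here beyond bookkeeping, since the problem is a linear least-squares problem in each factor and the LoFT scaling precisely supplies the correct normal-equations preconditioner.
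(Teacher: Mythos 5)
Your proof is correct and follows essentially the same route as the paper: both hinge on the cancellation $UV^\top V(V^\top V)^{-1}=U$ at unit step size, the paper phrasing the conclusion via the residual $E_{k+1}=E_k(I-\cP_{V_k})$ and its identification with the least-squares minimum, while you equivalently identify the new iterate with the normal-equations solution $AV(V^\top V)^{-1}$ directly.
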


\begin{proof}
    Without loss of generality, we assume $U$ is updated. Let $E_k = U_k V_k^\top - A$, then: 
    \begin{align*}
        E_{k+1} = U_{k+1} V_k^\top - A &= \left(U_{k} - E_k V_k \left( V_k^\top V_k \right)^{-1} \right) V_k^\top - A \\
        &= E_k - E_k V_k \left( V_k^\top V_k \right)^{-1} V_k^\top \\
        &= E_k \left(I - \cP_{V_k} \right).
    \end{align*}
    Therefore, 
    \begin{align*}
        f(U_{k+1}, V_k) = \frac12 \|E_{k+1}\|_F^2 = \frac12 \|E_k \left(I - \cP_{V_k}\right)\|_F^2 = \min_{U \in \R^{m\times r}} \frac12 \|U V_k^\top - A \|_F^2 = \min_{U \in \R^{m\times r}} f(U, V_k). 
    \end{align*}
    Analogically, we can derive
    \begin{align*}
        f(U_k, V_{k+1}) = \min_{V \in \R^{n\times r}} f(U_k, V), 
    \end{align*}
    which concludes the proof. 
    
\end{proof}

\clearpage

\section{Implementation Details}
\label{appendix: impl-details}

\paragraph{Compute Information.} All experiments reported in this paper were conducted using a single NVIDIA A100-SXM4-40GB GPU. This setup was used consistently across all experimental runs. Time of execution and memory usage varied slightly depending on the model configuration, but all runs were completed on a single-GPU setup. No additional or external compute (e.g., cloud services) was used during these experiments. 

The implementation of LoFT used in our experiments can be found at: \href{https://github.com/tnurbek/loft}{https://github.com/tnurbek/loft}.

\subsection{Datasets}

\paragraph{Commonsense Reasoning.} To evaluate  language models’ reasoning capabilities, we use a curated commonsense reasoning benchmark \textsc{Commonsense170K} \citep{hu-etal-2023-llm} consisting of $170\text{K}$ diverse examples. These examples are drawn from multiple existing commonsense QA datasets and span a variety of tasks, including physical reasoning, social intuition, temporal understanding, and cause-effect inference. 

\paragraph{Image Classification.} We conduct experiments on four diverse and challenging datasets to evaluate the generalization ability of our method in the image classification domain: 
\begin{itemize}
    \item \textbf{ISIC2019} \citep{codella2019skin} is a medical dataset composed of $25300$ training and $8238$ test dermoscopic images spanning eight skin lesion categories. It presents a long-tailed distribution, with the largest class heavily overrepresented relative to rare malignancies such as dermatofibroma or vascular lesions. The dataset is particularly challenging due to inter-class visual similarity and intra-class variability. 

    \item \textbf{HAM10000} \citep{tschandl2018ham10000} contains $\{8.2\text{K} + 1.2\text{K}\}$ (training + test) high-resolution dermoscopic images categorized into seven skin lesion types. In includes lesions from diverse populations and acquisition sources. Similar to ISIC2019, this dataset suffers from severe class imbalance. 

    \item \textbf{Diabetic Retinopathy} \citep{graham2015diabetic} consists of $\{115\text{K} + 14.2\text{K}\}$ (training + test) retinal fundus images annotated with ordinal labels representing five stages of diabetic retinopathy severity. The task involves predicting these severity levels from fundus scans. 

    \item \textbf{DomainNet} \citep{peng2019domainnet} is a large-scale dataset designed for domain generalization and adaptation. It contains approximately $587000$ images from $345$ categories across six domains: real, clipart, infograph, painting, quickdraw, and sketch. Its substantial domain shift and high class diversity make it a valuable benchmark for testing superiority of the methods. 
    
\end{itemize}

\paragraph{Math Reasoning.} To assess mathematical reasoning in large language models, we use the \textsc{Orca-Math} dataset \citep{mitra2024orca}, a benchmark of $200\text{K}$ diverse math problems spanning arithmetic, algebra, geometry, calculus, and probability. Each problem requires multi-step reasoning and symbolic manipulation, making the dataset well-suited for evaluating fine-tuning strategies.

\paragraph{Language Modeling.} To evaluate language modeling and text generation under low-resource conditions, we use the \textsc{WikiText2} dataset \citep{merity2017pointer}, a widely adopted benchmark consisting of over $100\text{K}$ tokens from cleaned Wikipedia articles. The dataset preserves natural long-range dependencies by retaining full articles and punctuation, making it suitable for assessing perplexity and generalization in autoregressive models. We follow the original data split and preprocessing protocol established by \citet{radford2019language}.

\subsection{Hyperparameters} 
\label{appendix: hyperparamters}
We report training configurations for the main experiments: commonsense reasoning and image classification. For clarity and reproducibility, the full hyperparameter settings for each task are presented in tables below. Hyperparameters for the remaining tasks, including math reasoning, language modeling, and code generation, are detailed separately in Sections~\ref{appendix: qloft}, ~\ref{appendix: additional-baselines}, ~\ref{appendix: coding-exps}, respectively.

\paragraph{Commonsense Reasoning.} 
We evaluate three generations of LLaMA family models, LLaMA-7B, LLaMA2-7B, and LLaMA3-8B, to test whether our proposed \textbf{LoFT} approach scales consistently across architectural updates. For each backbone, we compare against two strong parameter-efficient baselines, LoRA \citep{hu2022lora} and DoRA \citep{liu2024dora}. For these experiments, we adopt the optimal hyperparameter settings reported in \citep{liu2024dora}. 
We adopt the same learning rate, learning rate scheduler, warmup steps, batch size, and the same $\operatorname{Q, K, V, Up, Down}$ matrices for applying LoRA. The full configuration is summarized in Table~\ref{tab: comm-reasoning-hyperparameters}.

\begin{table}[t!]
    \centering
    \caption{Hyperparameter configurations used for LoRA/DoRA (as in \citep{liu2024dora}) and our method, LoFT, across LLaMA model variants on commonsense reasoning tasks. Unlike prior method that tune the LoRA scaling factor $\alpha$, LoFT sets $\alpha=r$ consistently across all models without the need for tuning. }
    \resizebox{\linewidth}{!}{
    \begin{tabular}{lccc|ccc}
        \toprule 
        \multirow{2.5}{*}{\textbf{Hyperparameter}} & 
        \multicolumn{3}{c}{\textbf{LoRA/DoRA}} & \multicolumn{3}{c}{\textbf{LoFT}} \\ \cmidrule{2-7} 
        & LLaMA-7B & LLaMA2-7B & LLaMA3-8B & LLaMA-7B & LLaMA2-7B & LLaMA3-8B \\ \midrule 
        \textbf{Rank $r$} & \multicolumn{3}{c}{$r$} & \multicolumn{3}{c}{$r$} \\ 
        \textbf{Alpha scaler $\alpha$} & \multicolumn{3}{c}{$2{\times}r$} & \multicolumn{3}{c}{$r$} \\ 
        \textbf{Dropout} & \multicolumn{3}{c}{$0.05$} & \multicolumn{3}{c}{$0.05$} \\ 
        \textbf{Optimizer} & \multicolumn{3}{c}{AdamW} & \multicolumn{3}{c}{AdamW} \\ 
        \textbf{Learning rate} & $2{\times}10^{-4}$ & $3{\times}10^{-4}$ & $1{\times}10^{-4}$ & $2{\times}10^{-4}$ & $3{\times}10^{-4}$ & $1{\times}10^{-4}$ \\ 
        \textbf{LR scheduler} & \multicolumn{3}{c}{Linear} & \multicolumn{3}{c}{Linear} \\ 
        \textbf{Batch size} & \multicolumn{3}{c}{$16$} & \multicolumn{3}{c}{$16$} \\ 
        \textbf{Micro-batch size} & \multicolumn{3}{c}{$16$} & \multicolumn{3}{c}{$16$} \\ 
        \textbf{Warmup steps} & \multicolumn{3}{c}{$100$} & \multicolumn{3}{c}{$100$} \\ 
        \textbf{Training epochs} & \multicolumn{3}{c}{$3$} & \multicolumn{3}{c}{$3$} \\ 
        \textbf{Low-rank targets} & \multicolumn{3}{c}{$\operatorname{Q,K,V,Up,Down}$} & \multicolumn{3}{c}{$\operatorname{Q,K,V,Up,Down}$} \\ \bottomrule 
    \end{tabular}
    }
    \label{tab: comm-reasoning-hyperparameters}
\end{table}

\paragraph{Image Classification.} 
We conduct image classification experiments using the ViT-B/16 model across four datasets: ISIC2019, HAM10000, Diabetic Retinopathy, and DomainNet. The input resolution is fixed to $224{\times}224$ pixels, and the patch size is set to $16$. All methods, including full fine-tuning, LoRA, DoRA, and our proposed LoFT, share the same training configuration to ensure a fair comparison. 

Specifically, we fix the learning rate to $5{\times}10^{-4}$ across all datasets. The batch size is set to $64$ for medical datasets and increased to $256$ for DomainNet due to its scale. All models are trained for $3$ epochs using the AdamW optimizer, with a linear learning rate scheduler and a warmup ratio of $0.1$. A dropout rate of $0.1$ is applied, and low-rank methods target both the $\operatorname{Q,K,V}$ attention layers and the $\operatorname{Dense}$ layers. These hyperparameters are summarized in Table~\ref{tab: vit_hparams}. 

\paragraph{Scaling Factor.} We clarify the role of the scaling hyperparameter $\alpha$. As noted in Section~\ref{sec: method}, we set $\alpha=1$ in LoFT. In practice, the HuggingFace PEFT library implements scaling as $\alpha / r$, so setting $\alpha = r$ yields an effective scaling factor of $1$, thereby removing the need for hyperparameter tuning. For LoRA and DoRA baselines, we followed the recommended setting $\alpha=2r$ (see Table~10 in \citet{liu2024dora}) to ensure fairness. One of LoFT's design goals is precisely to eliminate this hyperparameter, which we emphasize in the main text.

\begin{table}[t!]
    \centering
    \caption{Training hyperparameters for ViT-B/16 across four image classification datasets. All methods (Full FT, LoRA, DoRA, and LoFT) are trained using the same configuration for fair comparison.} 
    \label{tab: vit_hparams}
    \resizebox{\linewidth}{!}{
    \begin{tabular}{lccccc|cc}
        \toprule
        \textbf{Dataset} & \textbf{Rank $r$} & \textbf{Batch Size} & \textbf{LR} & \textbf{Epochs} & \textbf{Target Modules} & \textbf{LoRA/DoRA $\alpha$} & \textbf{LoFT $\alpha$} \\ 
        \midrule
        ISIC2019 & $r$ & $64$ & $5{\times}10^{-4}$ & $3$ & $\operatorname{Q, K, V, Dense}$ & $2{\times}r$ & $r$ \\ 
        HAM10000 & $r$ & $64$ & $5{\times}10^{-4}$ & $3$ & $\operatorname{Q, K, V, Dense}$ & $2{\times}r$ & $r$ \\ 
        Retinopathy & $r$ & $64$ & $5{\times}10^{-4}$ & $3$ & $\operatorname{Q, K, V, Dense}$ & $2{\times}r$ & $r$ \\ 
        DomainNet & $r$ & $256$ & $5{\times}10^{-4}$ & $3$ & $\operatorname{Q, K, V, Dense}$ & $2{\times}r$ & $r$ \\ 
        \bottomrule
    \end{tabular}
    }
    
    \vspace{0.5em}
    \textit{Common settings:} Optimizer = AdamW, LR scheduler = Linear, Warmup ratio = $0.1$, Dropout = $0.1$, Micro-batch size = Batch size. 
\end{table}

\newpage

\section{LoFT Algorithm}

\begin{algorithm}[H]
\caption{LoFT-AdamW with Alternating Updates}
\label{alg:loft_adamw}
\begin{algorithmic}[1]
\Require Pretrained weights $W_0$, low-rank factors $U_0, V_0$, learning rate $\eta_k$, weight decay rate $\lambda$, AdamW parameters $\beta_1, \beta_2$, $\epsilon$ 
\State Initialize first and second moments: $m^U_0, m^V_0, p^U_0, p^V_0 \gets 0$
\State Set alternating update flag: \texttt{update\_U} $\gets$ \texttt{False}
\For{$k = 1, 2, \dots$}
    \State $W_k \gets W_0 + U_k V_k^\top$ \hfill \textcolor{gray}{\# Reconstruct full weight matrix}
    \State $g_W \gets \nabla_W f(W_k)$ \hfill \textcolor{gray}{\# Get full gradient (only for notational purposes)}
    \State $g_U \gets g_W V_k,\;\;g_V \gets g_W^\top U_k$ \hfill \textcolor{gray}{\# Project gradients to low-rank factors}  
    \State $C^V_k \gets (V_{k-1}^\top V_k)(V_k^\top V_k)^{-1}, \quad C^U_k \gets (U_{k-1}^\top U_k)(U_k^\top U_k)^{-1}$ 
    \State $\tilde{g}_U \gets g_U (V_k^\top V_k)^{-1}, \tilde{g}_V \gets g_V (U_k^\top U_k)^{-1}$
    \State $m^U_k \gets \beta_1 m^U_{k-1} C^V_k + (1 - \beta_1) \tilde{g}_U$ \hfill \textcolor{gray}{\# First moment calibration}
    \State $m^V_k \gets \beta_1 m^V_{k-1} C^U_k + (1 - \beta_1) \tilde{g}_V$ 
    \State $p^U_k \gets \beta_2 p^U_{k-1} (C^V_k \otimes C^V_k) + (1 - \beta_2)(\tilde{g}_U \bullet \tilde{g}_U)$ \hfill \textcolor{gray}{\# Second moment calibration} 
    \State $p^V_k \gets \beta_2 p^V_{k-1} (C^U_k \otimes C^U_k) + (1 - \beta_2)(\tilde{g}_V \bullet \tilde{g}_V)$
    \If{\texttt{update\_U}} \hfill  \textcolor{gray}{\# Alternating updates} 
        \State $v^U_k \gets p^U_k (V_k \ast V_k)$ \hfill \textcolor{gray}{\# Reconstruct second moment in projected space} 
        \State $\tilde{m}^U_k \gets m^U_k V_k^\top / (1 - \beta_1^k)$
        \State $\tilde{v}^U_k \gets v^U_k / (1 - \beta_2^k)$
        \State $\Delta U \gets \eta_k \cdot \dfrac{\tilde{m}^U_k}{\sqrt{\tilde{v}^U_k + \epsilon}} V_k (V_k^\top V_k)^{-1}$ \hfill \textcolor{gray}{\# Update $U$ with projection} 
        \State $U_{k+1} \gets (1 - \lambda \eta_k) U_k - \Delta U$
        \State $V_{k+1} \gets V_k$
    \Else
        \State $v^V_k \gets p^V_k (U_k \ast U_k)$ \hfill \textcolor{gray}{\# Reconstruct second moment in projected space} 
        \State $\tilde{m}^V_k \gets m^V_k U_k^\top / (1 - \beta_1^k)$
        \State $\tilde{v}^V_k \gets v^V_k / (1 - \beta_2^k)$
        \State $\Delta V \gets \eta_k \cdot \dfrac{\tilde{m}^V_k}{\sqrt{\tilde{v}^V_k + \epsilon}} U_k (U_k^\top U_k)^{-1}$ \hfill \textcolor{gray}{\# Update $V$ with projection} 
        \State $V_{k+1} \gets (1 - \lambda \eta_k) V_k - \Delta V$
        \State $U_{k+1} \gets U_k$
    \EndIf
    \State \texttt{update\_U} $\gets$ \textbf{not} \texttt{update\_U} \hfill  \textcolor{gray}{\# Alternate update direction}
\EndFor
\end{algorithmic}
\end{algorithm}

\subsection{LoFT-Muon}
\label{sec:muon}

In this section, we provide an extension of our approach to Muon~\citep{jordan2024muon} algorithm. Firstly, we introduce the original Muon in Algorithm~\ref{alg:muon}.

\begin{algorithm}[H]
\caption{Muon}
\label{alg:muon}
\begin{algorithmic}[1]
\Require Learning rates $\eta_k$, momentum $\mu$ 
\State Initialize $m_0 \gets 0$
\For{$k = 1, 2, \dots$}
    \State $g_W \gets \nabla_W f(W_k)$ \hfill \textcolor{gray}{\# Compute full gradient}
    \State $m_k \gets \mu m_{k-1} + g_W$ \hfill \textcolor{gray}{\# Compute momentum}
    \State $o_k \gets \text{NewtonSchulz5}(m_k)$ \hfill \textcolor{gray}{\# Algorithm~\ref{alg:NS5}}
    \State $W_{k+1} \gets W_{k} - \eta_k o_k$ \hfill \textcolor{gray}{\# Update Parameters}     
\EndFor
\end{algorithmic}
\end{algorithm}

\begin{algorithm}[H]
\caption{NewtonSchulz5}
\label{alg:NS5}
\begin{algorithmic}[1]
\Require number of steps $n_\text{steps}$, $\epsilon=1e^{-7}$, $G \in \R^{m \times n}$, $(a, b, c) = (3.4445, -4.7750, 2.0315)$
\State $X \gets G / (\|G\|_F + \epsilon)$ \hfill \textcolor{gray}{\# Proper initialization}
\If{m > n} \hfill \textcolor{gray}{\# For efficient computations} 
    \State $X \gets X^\top$
\EndIf 
\For{$k = 1, 2, \dots, n_\text{steps}$}  
    \State $A \gets X X^\top$
    \State $B \gets bA + c A^{2}$
    \State $X \gets aX + BX$
\EndFor
\If{m > n}
    \State $X \gets X^\top$
\EndIf \\
\Return $X$
\end{algorithmic}
\end{algorithm}

Examining the Muon algorithm, we observe that, like Adam, it employs first-order momentum; therefore, to adapt it to the LoFT setting, we can directly apply the first three building blocks. Furthermore, we can reconstruct an estimate of the full-finetuning momentum using \eqref{eq:mom_lora_W_U}, i.e., $\tilde{m}^U_k = m^{U}_k V_k^\top.$ We note that the $\tilde{m}^U_k$ is at most rank $r$, but NewtonSchulz5 (Algortihm~\ref{alg:NS5}) can't take advantage of that and directly plugging in $\tilde{m}^U_k$ to NewtonSchulz5 would not benefit computations as we would be working with large $m \times n$ matrix. Therefore, we design efficient version of NewtonSchulz5 algortihm that accounts for low-rank inputs, see below.

\begin{algorithm}[H]
\caption{NewtonSchulz5\_LowRank}
\label{alg:NS5LR}
\begin{algorithmic}[1]
\Require number of steps $n_\text{steps}$, $\epsilon=10^{-7}$, $U\in\mathbb{R}^{m\times r}$, $V\in\mathbb{R}^{n\times r}$ ($G = UV^\top$), $(a, b, c) = (3.4445, -4.7750, 2.0315)$
\If{$m>n$} \hfill \textcolor{gray}{\# For efficient computations (mirror of dense case)}
    \State $U, V \gets V, U$   \hfill \textcolor{gray}{\# Flip $U,V$}
\EndIf
\State $UtU \gets U^\top U \in\mathbb{R}^{r\times r}$; \quad $VtV \gets V^\top V\in\mathbb{R}^{r\times r}$
\State $\|G\|_F \gets \sqrt{\mathrm{tr}\!\big((U^\top U)(V^\top V)\big)} $ \hfill \textcolor{gray}{\# Proper initialization (low-rank)}
\State $X_c \gets \dfrac{1}{\|G\|_F + \epsilon}\,I_r$ \hfill \textcolor{gray}{\# Core $r\times r$ variable; $X = U X_c V^\top$}
\For{$k = 1, 2, \dots, n_\text{steps}$}
    \State $S \gets X_c VtV X_c^\top$ \hfill \textcolor{gray}{\# $r \times r$ form of $XX^\top$}
    \State $A_{\small} \gets S UtU$
    \State $B_{\small} \gets b\,A_{\small} + c\,A_{\small}^{2}$
    \State $X_c \gets a\,X_c + B_{\small} X_c$
\EndFor
\State $X_U \gets U X_c$  \hfill \textcolor{gray}{\# $X \gets U X_c V^\top$, we use only $U X_c$ as $V$ is added implicitly via $V_k^\top$/ $U_k$($U / V$-update)}
\If{$m>n$}
    \State $X_U \gets V X_c^\top $
\EndIf \\
\Return $X_U$ \hfill \textcolor{gray}{\# Partial polar factors of $G$; cost per step $\mathcal{O}((m+n)r^2 + r^3)$}
\end{algorithmic}
\end{algorithm}

We note that the cost per step of this algorithm is only $\mathcal{O}((m+n)r^2 + r^3)$. Finally, we are ready to proceed with LoFT-Muon, which only requires extra memory of $\mathcal{O}((m+n)r)$, thus matching the standard LoRA memory requirements.

\begin{algorithm}[H]
\caption{LoFT-Muon}
\label{alg:loft_muon}
\begin{algorithmic}[1]
\Require learning rates $\eta_k$, momentum parameter $\mu$, 
\State Initialize $m^U_0, m^V_0 \gets 0$,  weight decay rate $\lambda$
\State Set alternating update flag: \texttt{update\_U} $\gets$ \texttt{False}
\For{$k = 1, 2, \dots$}
    \State \textcolor{gray}{\# Reconstruct full weight matrix}
    \State $W_k \gets W_0 + U_k V_k^\top$ 

    \State \textcolor{gray}{\# Get full gradient (only for notational purposes)}
    \State $g_W \gets \nabla_W f(W_k)$

    \State \textcolor{gray}{\# Project gradients to low-rank factors}     
    \State $g_U \gets g_W V_k,\;\;g_V \gets g_W^\top U_k$ 
    \State $C^V_k \gets (V_{k-1}^\top V_k)(V_k^\top V_k)^{-1}, \quad C^U_k \gets (U_{k-1}^\top U_k)(U_k^\top U_k)^{-1}$ 
    \State $\tilde{g}_U \gets g_U (V_k^\top V_k)^{-1}, \tilde{g}_V \gets g_V (U_k^\top U_k)^{-1}$ 

    \State \textcolor{gray}{\# First moment calibration}
    \State $m^U_k \gets \mu m^U_{k-1} C^V_k + \tilde{g}_U$
    \State $m^V_k \gets \mu  m^V_{k-1} C^U_k + \tilde{g}_V$ 
    \State \textcolor{gray}{\# Alternating updates} 
    \If{\texttt{update\_U}} 
        \State $\Delta_U \gets \text{NewtonSchulz5\_LowRank}(m^U_k, V_k)$
        \State $U_{k+1} \gets (1 - \lambda \eta_k) U_k - \Delta U$
        \State $V_{k+1} \gets V_k$
    \Else
        \State $\Delta_V \gets \text{NewtonSchulz5\_LowRank}(m^V_k, U_k)$
        \State $V_{k+1} \gets (1 - \lambda \eta_k) V_k - \Delta V$
        \State $U_{k+1} \gets U_k$
    \EndIf

    \State \textcolor{gray}{\# Alternate update direction}
    \State \texttt{update\_U} $\gets$ \textbf{not} \texttt{update\_U} 
\EndFor
\end{algorithmic}
\end{algorithm}

\clearpage

\section{Optimization and Efficiency Analysis}
\label{appendix: additional-exp-results}

\subsection{Ablation Study}

In this ablation study, we investigate the contribution of key components in our proposed LoFT method by selectively disabling them and observing the impact on performance. The goal is to isolate the effectiveness of (i) state calibration, and (ii) alternate updates. The experiments are conducted on the WikiText-2 dataset using a GPT-2 model in a causal language modeling setup. 

We evaluate four variants: 
\begin{itemize}[left=0pt, nosep]
    \item \textbf{LoFT (full method)}: includes both alternate updates and state calibration. 
    \item \textbf{LoFT without alternate updates}: removes the alternation mechanism while keeping calibration. 
    \item \textbf{LoFT without state calibration}: disables calibration while retaining alternating updates. 
    \item \textbf{LoFT without either}: disables both the alternation and state calibration. 
\end{itemize}

\begin{figure}[H]
    \centering
    \includegraphics[width=\linewidth]{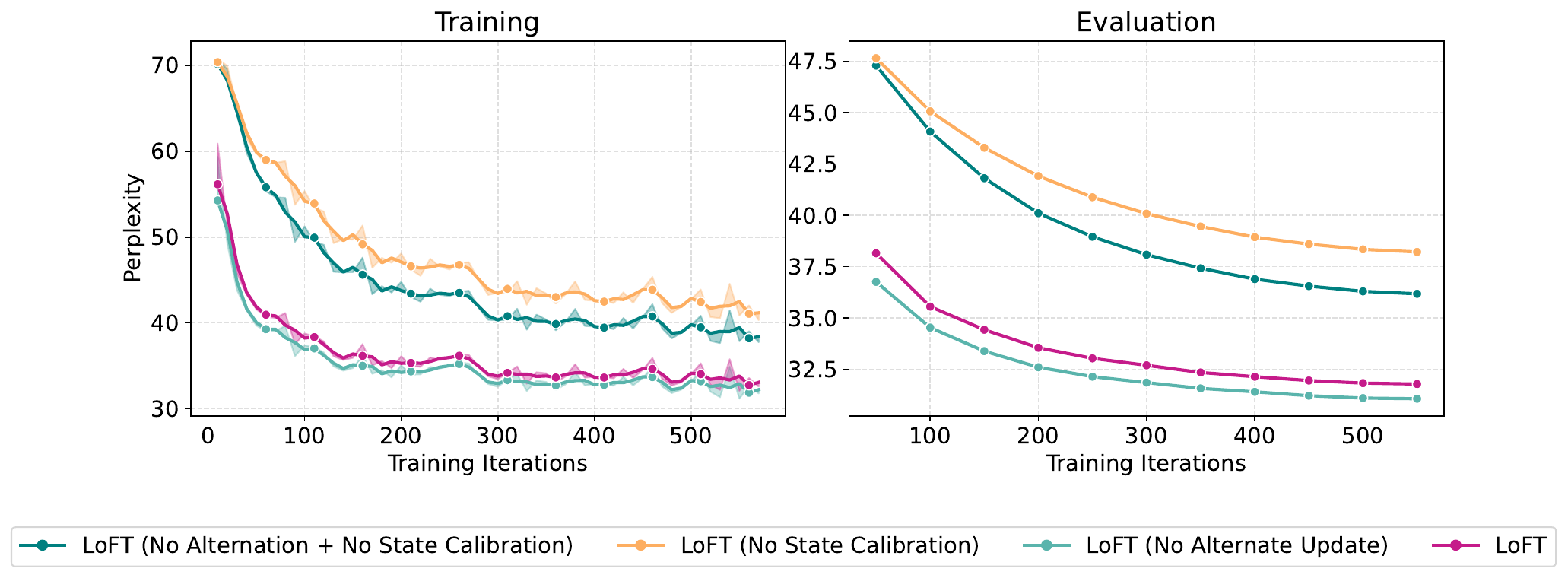}
    \caption{Ablation study of the proposed approach on a language modeling task. 
    We train a GPT-2 model on the WikiText-2 dataset and evaluate the effect of key components of LoFT by incrementally removing (i) state calibration, (ii) alternate update, and (iii) both. Training perplexity (left) shows smoothed curves with shaded raw values, while evaluation perplexity (right) presents the unsmoothed results.}
    \label{fig: ablation-study}
\end{figure}

Training and evaluation perplexities are reported in Figure~\ref{fig: ablation-study}. For training curves, we show smoothed perplexity ($3$-step centered moving average) with raw values shaded underneath; evaluation perplexity is shown unsmoothed. 

The best-performing variant in this specific setting is LoFT without alternate updates, which slightly outperforms the full LoFT setup. This is likely due to the fact that removing alternation effectively doubles the update frequency of LoFT parameters, which proves beneficial on WikiText-2 with GPT-2.
We can see a significant decrease in performance when considering variants that do not have state calibration. 

These results highlight the importance of state calibration, while they also suggest that LoFT can be slightly improved if we consider parallel updates. We attribute this to the small step size and gradient clipping, which limit the impact of the cross term that could be problematic in some cases.

\paragraph{Scaling up to LLaMA and ViT.} 
To further assess generality, we conduct additional ablations on one large language model benchmark and one vision benchmark. 
Specifically, we evaluate LoFT on LLaMA-7B with commonsense reasoning tasks and on ViT-Base for CIFAR-100 classification. In each case, we remove LoFT components one at a time. The results are reported in Tables~\ref{tab: ablation-vit} and \ref{tab: ablation-llama}.

\begin{table}[h]
    \centering
    \caption{Ablation results on CIFAR-100 with ViT-Base.}
    \label{tab: ablation-vit}
    \begin{tabular}{l|c}
        \toprule
        \rowcolor{lightblue} \textbf{LoFT variant} & \textbf{Accuracy} \\ 
        \midrule 
        Full method & $\underline{91.18}$ \\ 
        No alternation & $\mathbf{91.60}$ \\ 
        No state calibration & $89.38$ \\ 
        No alternation + no state calibration & $89.94$ \\ 
        \bottomrule 
    \end{tabular}
\end{table}

\begin{table}[h]
    \centering
    \caption{Ablation results on LLaMA-7B with commonsense reasoning benchmarks.}
    \label{tab: ablation-llama}
    \resizebox{\linewidth}{!}{
    \begin{tabular}{l|cccccccc|c}
        \toprule
        \rowcolor{lightblue} \textbf{LoFT Variant} & \textbf{BoolQ} & \textbf{PIQA} & \textbf{SIQA} & \textbf{HS} & \textbf{WG} & \textbf{ARC-C} & \textbf{ARC-E} & \textbf{OBQA} & \textbf{avg.} \\
        \midrule
        Full method                             & $67.34$ & $80.96$ & $76.20$ & $80.50$ & $76.40$ & $63.62$ & $79.21$ & $75.40$ & $\mathbf{74.95}$ \\ 
        No alternation                          & $68.56$ & $79.26$ & $77.33$ & $77.16$ & $78.37$ & $62.97$ & $79.42$ & $74.40$ & $\underline{74.68}$ \\ 
        No state calibration                    & $03.18$ & $48.80$ & $66.43$ & $21.17$ & $68.90$ & $55.03$ & $75.63$ & $66.00$ & $50.64$ \\ 
        No alt. + no state cal.   & $57.31$ & $62.24$ & $55.58$ & $17.27$ & $65.27$ & $56.48$ & $73.06$ & $68.60$ & $56.98$ \\ 
        \bottomrule 
    \end{tabular}
    }
\end{table}

Across both language and vision benchmarks, the results align with our GPT-2 findings. 
LoFT without alternation sometimes matches or slightly outperforms the full method, likely due to increased update frequency. In contrast, removing state calibration consistently causes large performance drops, particularly dramatic on LLaMA-7B. 
Overall, these ablations confirm that both alternation and state calibration are important contributors, with calibration being indispensable for LoFT's stability and effectiveness.

\subsection{Memory Footprint}
\label{appendix: memory-eff}

We evaluate the memory efficiency of LoFT in comparison to LoRA, DoRA, and DoRA (simple) under two configurations: rank $r{=}16$ and rank $r{=}4$. All experiments were conducted using the LLaMA-7B model on commonsense reasoning tasks (Tables~\ref{tab: memory-footprint-r16} and~\ref{tab: memory-footprint-r4}).

\paragraph{Theoretical analysis.} For AdamW, LoRA requires 
\begin{equation*}
    mn \,[W_0] + (m+n)r \,[U, V] + 4(m+n)r \,[\text{optimizer states}] \;=\; mn + 5(m+n)r, 
\end{equation*}
while LoFT requires
\begin{eqnarray}
    mn \,[W_0] &+& (m+n)r \,[U, V] + 2(m+n)r \,[\text{previous iterates}] + 2(m+n)r \, [\text{momentum}] \nonumber \\ 
    &+& 2(m+n)r^2 \,[\text{cross-terms}] = mn + 5(m+n)r + 2(m+n)r^2. \nonumber 
\end{eqnarray} 
The additional $2(m+n)r^2$ term arises from cross-terms for optimizer state recalibration. Crucially, this scales with $(m+n)$ rather than $mn$, ensuring LoFT remains far more efficient than full fine-tuning when $r$ is small.

\paragraph{Empirical results.} 

At rank $r{=}16$, LoFT matches LoRA in terms of trainable parameter percentage ($0.4145\%$) with no increase, while incurring only a $\underline{+25.65\%}$ increase in memory usage. This memory cost is nearly identical to DoRA (simple), which also maintains a low overhead ($\mathbf{+25.23\%}$), and significantly lower than full DoRA, which increases memory by over $341\%$. 

At the lower rank setting $r{=}4$, LoFT maintains parameter parity with LoRA ($0.1040\%$) and achieves a very modest memory increase of just $\mathbf{+6.71\%}$, compared to the large $342\%$ increase with DoRA. While DoRA (simple) also limits memory to some extent, it still shows over $25\%$ overhead and increases trainable parameters by $12.4\%$.

\begin{table}[H]
    \centering
    \caption{Comparison of trainable parameter percentage and memory usage for different methods at rank $r{=}16$ using LLaMA-7B on commonsense reasoning tasks.} 
    \label{tab: memory-footprint-r16}
    \resizebox{\linewidth}{!}{
    \begin{tabular}{l|cr|S[table-format=2.2, detect-weight]r}
        \toprule
        \rowcolor{lightblue} \textbf{Method} & \textbf{Trainable params $(\%)$} & \textbf{+ Relative Incr.} & \textbf{Memory (GB)} & \textbf{+ Relative Incr.}  \\ \midrule 
        LoRA & $0.4145$ & $+0.00\%$ & 28.50 & $+0.00\%$ \\ 
        DoRA & $0.4274$ & \cellcolor{lightred} $+3.11\%$ & 125.95 & \cellcolor{lightred} $+341.93\%$\\ 
        DoRA (simple) & $0.4274$ & \cellcolor{lightred} $+3.11\%$ & 35.69 & \cellcolor{lightgreen} $\mathbf{+25.23\%}$ \\ \midrule 
        LoFT & $0.4145$ & \cellcolor{lightgreen} $\mathbf{+0.00\%}$ & 35.81 & \cellcolor{lightgreen} $\underline{+25.65\%}$ \\ 
        \bottomrule 
    \end{tabular}
    }
\end{table}

\begin{table}[H]
    \centering
    \caption{Comparison of trainable parameter percentage and memory usage for different methods at rank $r{=}4$ using LLaMA-7B on commonsense reasoning tasks.} 
    \label{tab: memory-footprint-r4}
    \resizebox{\linewidth}{!}{
    \begin{tabular}{l|cr|S[table-format=2.2, detect-weight]r}
        \toprule
        \rowcolor{lightblue} \textbf{Method} & \textbf{Trainable params $(\%)$} & \textbf{+ Relative Incr.} & \textbf{Memory (GB)} & \textbf{+ Relative Incr.}  \\ \midrule 
        LoRA & $0.1040$ & $+0.00\%$ & 28.15 & $+0.00\%$ \\ 
        DoRA & $0.1169$ & \cellcolor{lightred} $+12.40\%$ & 124.47 & \cellcolor{lightred} $+342.17\%$\\ 
        DoRA (simple) & $0.1169$ & \cellcolor{lightred} $+12.40\%$ & 35.27 & \cellcolor{lightred} $+25.29\%$ \\ \midrule 
        LoFT & $0.1040$ & \cellcolor{lightgreen} $\mathbf{+0.00\%}$ & 30.04 & \cellcolor{lightgreen} $\mathbf{+6.71\%}$ \\ 
        \bottomrule 
    \end{tabular}
    }
\end{table}

\paragraph{Accuracy-efficiency trade-off.} Although LoFT introduces extra memory overhead relative to LoRA, it achieves higher accuracy at substantially smaller ranks. For instance, LoFT with $r=4$ surpasses LoRA with $r=16$ on LLaMA-7B, and LoFT with $r=1$ surpasses LoRA with $r=16$ on LLaMA-2-7B (see Table~\ref{tab: main-commonsence-reasoning}). Importantly, unlike DoRA, LoFT adds no backward-pass memory overhead (cf. Section~4.3 in DoRA~\citep{liu2024dora}). Thus, the modest increase is offset by substantial performance gains at lower ranks.

\paragraph{LoFT (simple).} We further identify that the main bottleneck in LoFT stems from the second-moment calibration. Since its effect on accuracy is marginal ($\sim 0.1\%$ drop at LLaMA-7B, $r{=}16$) (Table~\ref{tab: loft-simple-acc}), we propose \emph{LoFT (simple)}, which omits this step. As shown in Table~\ref{tab: memory-footprint-simple}, LoFT (simple) reduces overhead to under $6\%$ compared to LoRA, while maintaining nearly identical accuracy.

\begin{table}[h]
    \centering
    \caption{Performance comparison between LoFT and LoFT (simple) (LLaMA-7B, rank $r=16$).}  
    \label{tab: loft-simple-acc}
    \resizebox{\linewidth}{!}{
    \begin{tabular}{l|cccccccc|c}
        \toprule
        \rowcolor{lightblue} \textbf{Method} & \textbf{BoolQ} & \textbf{PIQA} & \textbf{SIQA} & \textbf{HS} & \textbf{WG} & \textbf{ARC-C} & \textbf{ARC-E} & \textbf{OBQA} & \textbf{avg.} \\ \midrule
        LoFT & $68.62$ & $82.80$ & $78.27$ & $82.69$ & $73.32$ & $64.30$ & $80.26$ & $78.40$ & $76.08$ \\ 
        LoFT (simple) & $68.50$ & $81.18$ & $78.20$ & $76.87$ & $78.93$ & $64.85$ & $81.14$ & $78.20$ & $75.98$ \\ 
        \bottomrule
    \end{tabular}
    }
\end{table} 

\begin{table}[h]
    \centering
    \caption{LoFT (simple) memory overhead on LLaMA-7B under ranks $r=16$ and $r=4$.} 
    \label{tab: memory-footprint-simple}
    \begin{tabular}{l|c|c}
        \toprule
        \rowcolor{lightblue} \textbf{Method} & \textbf{Memory (GB)} & \textbf{+ Relative Incr.} \\ \midrule
        LoRA & 28.50 & $+0.00\%$ \\
        LoFT (simple) ($r=16$) & 30.02 & $+5.35\%$ \\
        LoFT (simple) ($r=4$) & 29.61 & $+5.18\%$ \\
        \bottomrule
    \end{tabular}
\end{table}

Overall, LoFT offers the same parameter efficiency as LoRA while delivering competitive performance with substantially lower memory demands than DoRA variants. This makes LoFT a memory-efficient alternative suitable for deployment in resource-constrained settings. 

We refer the reader to \citep{liu2024dora} for detailed definitions of DoRA and DoRA (simple). In our experiments, we exclusively used DoRA (simple), as recommended by DoRA's authors. Also, the full DoRA implementation requires substantially more memory and is impractical to run on one GPU.

\subsection{Training Latency}

We also evaluate the training latency of LoFT relative to LoRA and DoRA (simple). LoFT introduces additional overhead due to optimizer state alignment and recalibration. Table~\ref{tab: latency-overhead} summarizes the latency (forward + backward + optimizer step) on LLaMA-7B across different ranks.

\begin{table}[h]
    \centering
    \caption{Relative training latency of LoRA, DoRA (simple), LoFT, and LoFT (simple) on LLaMA-7B. Latency is reported as a multiplicative factor relative to LoRA.}
    \label{tab: latency-overhead}
    \begin{tabular}{l|ccc}
        \toprule
        \rowcolor{lightblue} \textbf{Method} & $\mathbf{r=16}$ & $\mathbf{r=4}$ & $\mathbf{r=1}$ \\ \midrule 
        LoRA & $1.00\times$ & $1.00\times$ & $1.00\times$ \\ \midrule 
        DoRA (simple) & $\underline{2.38\times}$ & $2.54\times$ & $2.54\times$ \\ 
        LoFT & $3.23\times$ & $\underline{2.26\times}$ & $\underline{1.76\times}$ \\ 
        LoFT (simple) & $\mathbf{1.32\times}$ & $\mathbf{1.27\times}$ & $\mathbf{1.22\times}$ \\ 
        \bottomrule
    \end{tabular}
\end{table}

To better understand the main sources of overhead, we conducted ablations of LoFT variants at rank $r{=}4$. The results are shown in Table~\ref{tab: latency-ablation}.

\begin{table}[h]
    \centering
    \caption{Latency breakdown (in seconds) for LoFT variants at rank $(r=4)$ (LLaMA-7B).} 
    \label{tab: latency-ablation}
    \begin{tabular}{l|c}
        \toprule
        \rowcolor{lightblue} \textbf{LoFT Variant} & \textbf{Latency (s)} \\ \midrule 
        LoFT (full) & $1.0903$ \\
        No alternation & $1.5703$ \\
        No state calibration & $0.5810$ \\
        No alternation + no state calibration & $0.6146$ \\
        No second moment calibration [LoFT (simple)] & $0.6265$ \\
        \bottomrule
    \end{tabular}
\end{table}

These results indicate that the main bottleneck of LoFT arises from second-moment calibration. Omitting this step yields LoFT (simple), which reduces latency to within $\sim30\%$ of LoRA while being $\sim 2\times$ faster than the stronger baseline DoRA. 

\paragraph{Implementation note.} All latency measurements are based on a plain PyTorch implementation. We expect substantial speed-ups with a dedicated CUDA kernel implementations, which we plan as future work.

\subsection{Comparison with Additional Baselines}
\label{appendix: additional-baselines}

\paragraph{Experimental Setup.} We first fine-tune the original \textbf{GPT-2} ($137$M) on WikiText2 using the same data split and preprocessing as \citet{radford2019language}. All methods share the same training hyperparameters: $1$ epoch, AdamW optimizer, batch size~$64$, learning rate~$2{\times}10^{-4}$ with linear decay. For adapter-based baselines \citep{hu2022lora, kalajdzievski2023rank, zhang2023adaptive, wang2024lora, wang2024lorapro} we set the rank $r{=}4$; LoRA$^+$ uses its default temperature and dropout as in the official repository. After convergence, we evaluate on the WikiText-2 validation set and report \emph{perplexity} (lower is better).

\paragraph{Limitations of certain baselines.} VeRA \citep{kopiczkovera} and DoRA \citep{liu2024dora} only handle \texttt{Linear} layers. Because GPT-2 implements attention weights as \texttt{Conv1D} layers, reproducing these methods would require serious surgery and a major rewrite; we, therefore, omit them. In practice, this means VeRA and DoRA cannot be applied unchanged to a large family of models that rely on \texttt{Conv1D} parameterizations.

\begin{figure}[h] 
    \centering
    \begin{minipage}{0.49\linewidth}
        \centering
        \captionof{table}{Perplexity (PPL) results on the WikiText2 dataset for various fine-tuning methods applied to GPT-2. Lower values indicate better performance. LoFT achieves the best result, outperforming other parameter-efficient techniques.}
        \label{tab: wikitext-other-baselines}
        \resizebox{\linewidth}{!}{
        \begin{tabular}{clc}
            \toprule
            \rowcolor{lightblue} \textbf{Model} & \textbf{Method} & \textbf{WikiText2 (PPL $\downarrow$)} \\ \midrule 
            \multirow{9}{*}{\textbf{GPT-2}} & Zero-Shot & $60.38$ \\ 
            & Full FT & $\mathbf{29.51}$ \\ \cmidrule{2-3} 
            & LoRA$_{r=4}$ & $34.80$ \\ 
            & rsLoRA$_{r=4}$ & $32.96$ \\ 
            & AdaLoRA$_{r=4}$ & $55.67$ \\ 
            & LoRA-Pro$_{r=4}$ & $32.79$ \\ 
            & LoRA-GA$_{r=4}$ & $37.34$ \\ 
            & LoRA+$_{r=4}$ & $36.15$ \\ \cmidrule{2-3} 
            & LoFT$_{r=4}$ & $\underline{31.75}$ \\ \bottomrule 
        \end{tabular}
        }
    \end{minipage}\hfill
    \begin{minipage}{0.49\linewidth}
        \centering
        \captionof{table}{Perplexity (PPL) on WikiText2 for GPT-2 Large using various fine-tuning methods. LoFT achieves the best performance, outperforming full fine-tuning and other parameter-efficient techniques.} 
        \label{tab: wikitext-gpt2-large}
        \resizebox{\linewidth}{!}{
        \begin{tabular}{clc}
            \toprule
            \rowcolor{lightblue} \textbf{Model} & \textbf{Method} & \textbf{WikiText2 (PPL $\downarrow$)} \\ \midrule 
            \multirow{9}{*}{\parbox[c]{1cm}{\textbf{GPT-2\\Large}}} & Zero-Shot & $38.87$ \\ 
            & Full FT & $\underline{19.42}$ \\ \cmidrule{2-3} 
            & LoRA$_{r=4}$ & $19.78$ \\ 
            & rsLoRA$_{r=4}$ & $19.62$ \\ 
            & AdaLoRA$_{r=4}$ & $23.31$ \\ 
            & LoRA-Pro$_{r=4}$ & $20.06$ \\
            & LoRA-GA$_{r=4}$ & $21.44$ \\ 
            & LoRA+$_{r=4}$ & $19.73$ \\ \cmidrule{2-3} 
            & LoFT$_{r=4}$ & $\mathbf{19.26}$ \\ \bottomrule 
        \end{tabular}
        }
    \end{minipage}
\end{figure}

\paragraph{Results on GPT-2.} Table~\ref{tab: wikitext-other-baselines} reports validation perplexity. LoFT yields the lowest PPL ($31.75$), outperforming all other parameter-efficient baselines and coming within~$2.2$ points of full fine-tuning while updating only a small fraction of parameters. AdaLoRA \citep{zhang2023adaptive} performs noticeably worse in this low-resource regime. Training and evaluation curves are visualized in Figure~\ref{fig: gpt2-other-baselines}: LoFT converges smoothly and tracks Full FT closely throughout training, whereas other methods plateau higher.

\begin{figure}[h]
    \centering
    \includegraphics[width=\linewidth]{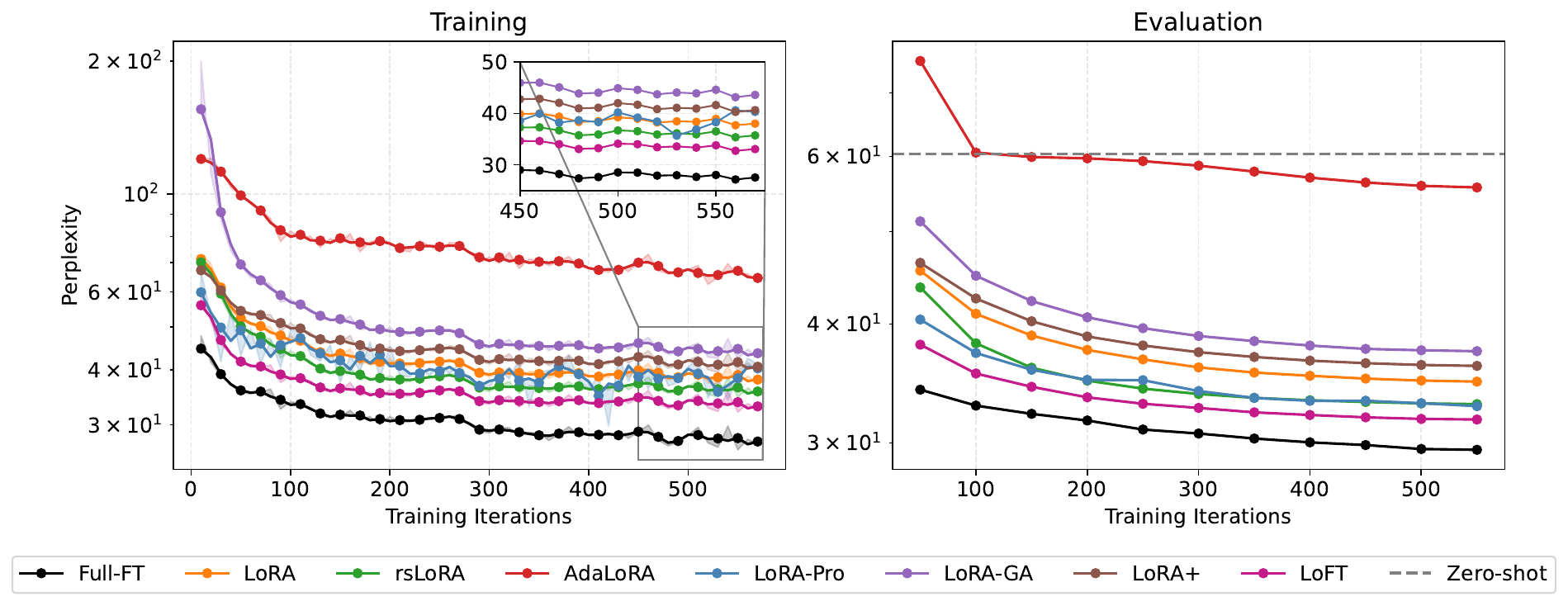}
    \caption{Training and evaluation perplexity curves for GPT-2 on WikiText-2 dataset. The left panel shows smoothed training perplexity ($3$-point moving average) for seven fine-tuning methods (Full-FT, LoRA, rsLoRA, AdaLoRA, LoRA-Pro, LoRA-GA, LoRA$^{+}$, and LoFT), with the raw PPL shaded beneath each curve. The right panel reports evaluation PPL for the same methods, with a dashed horizontal line at $60.38$ marking the zero-shot baseline. Table for a reference: Table~\ref{tab: wikitext-other-baselines}.} 
    \label{fig: gpt2-other-baselines}
\end{figure}

\begin{figure}[h]
    \centering
    \includegraphics[width=\linewidth]{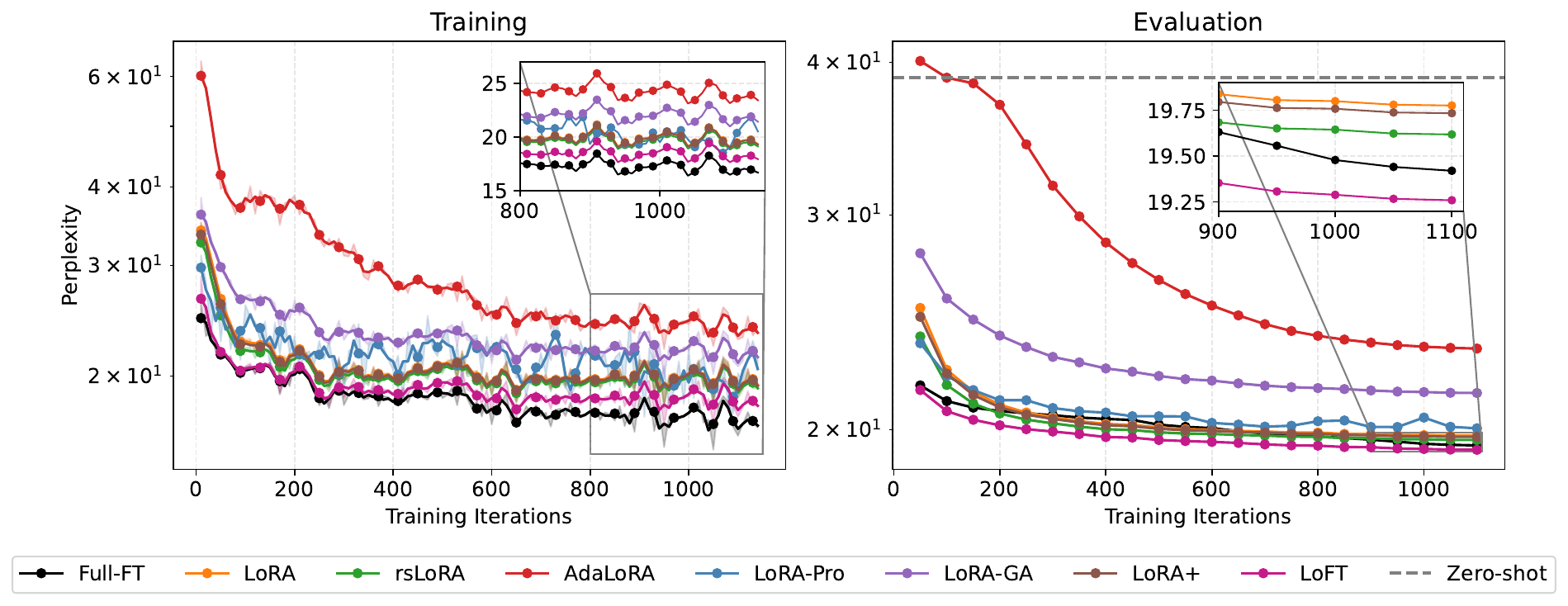}
    \caption{Training and evaluation perplexity curves for \textsc{GPT-2 Large} on WikiText-2 dataset. The left panel shows smoothed training perplexity (3-point centered moving average) for seven fine-tuning methods (Full-FT, LoRA, rsLoRA, AdaLoRA, LoRA-Pro, LoRA-GA, LoRA$^{+}$, and LoFT). The right panel presents evaluation perplexity curves, with a dashed horizontal line at 38.87 marking the zero-shot baseline. Table for a reference: Table~\ref{tab: wikitext-gpt2-large}.} 
    \label{fig: gpt2-large-other-baselines} 
\end{figure}

\paragraph{Scaling to GPT-2 Large.} We repeat the experiment on \textbf{GPT-2 Large} ($812$M) with the same data and hyper-parameters (batch size reduced to $32$ to fit memory for full fine-tuning). Table \ref{tab: wikitext-gpt2-large} extends the comparison to this larger model. The zero-shot model perplexity is $38.87$. Full fine-tuning brings this down to $19.42$, but LoFT achieves an even lower $19.26$ while updating only a small fraction of weights. The other adapter-style baselines cluster a few tenths higher (LoRA $19.78$, rsLoRA $19.62$, LoRA$^{+}$ $19.73$), and AdaLoRA again lags behind at $23.31$. In relative terms, LoFT improves on the vanilla LoRA baseline by $2.6\%$ and narrows (indeed, slightly surpasses) the gap to full fine-tuning, confirming that the gains observed on the smaller GPT-2 model persist and even strengthen at a larger scale. 

Figure~\ref{fig: gpt2-large-other-baselines} highlights an interesting trend: on GPT-2 Large, Full FT achieves the lowest \emph{training} perplexity, but its \emph{evaluation} perplexity stalls above LoFT, evidence of \emph{overfitting} as model capacity grows. By contrast, the low-rank structure of LoFT provides a built-in regularizer: it follows Full-FT during training yet generalizes better, maintaining leading evaluation PPL. On the smaller GPT-2 ($137$M), Full-FT still wins on both train and evaluation -- there, capacity is not large enough to overfit the WikiText2 dataset -- whereas at $812$M parameters, the risk of memorization rises and LoFT's parameter-efficient updates prove more robust.

\section{Language Model Experiments}

\subsection{Commonsense Reasoning}

For completeness, Table~\ref{tab: lower-ranks-commonsense} provides the exact task-wise accuracy scores for all methods and rank settings shown in Figure~\ref{fig: radar-plots} of the main paper. These results quantify how LoRA, DoRA, and LoFT behave across eight commonsense reasoning benchmarks when applied to LLaMA-7B with rank $r \in \{4, 2, 1\}$. 

As noted in the main text, LoFT maintains high and stable accuracy across all tasks, even under extreme compression (rank 1), whereas both LoRA and DoRA degrade substantially -- especially on more complex tasks like HellaSwag (HS), Winogrande (WG), and SIQA. Notably, DoRA at $r{=}4$ and $r{=}2$ exhibits drastic task-level failures, with near-zero performance on WG and erratic behavior across others, reflecting instability under constrained adaptation. In contrast, LoFT consistently performs well across ranks, confirming its robustness under limited parameter budgets. 

See Table~\ref{tab: lower-ranks-commonsense} for the exact per-task numbers.

\begin{table}[h]
    \centering
    \caption{Task-wise performance of LoRA, DoRA, and LoFT on commonsense reasoning benchmarks at lower ranks $(r=\{4,2,1\})$ using LLaMA-7B. While LoFT maintains stable accuracy across all tasks, both LoRA and DoRA show significant drops -- particularly on complex benchmarks such as HellaSwag and Winogrande -- indicating their limited reliability under extreme parameter constraints.}
    \resizebox{\linewidth}{!}{
        \begin{tabular}{cl
            S[table-format=2.2, detect-weight]
            S[table-format=2.2, detect-weight]
            S[table-format=2.2, detect-weight]
            S[table-format=2.2, detect-weight]
            S[table-format=2.2, detect-weight]
            S[table-format=2.2, detect-weight]
            S[table-format=2.2, detect-weight]
            S[table-format=2.2, detect-weight]
            |S[table-format=2.2, detect-weight]}
            \toprule
            \rowcolor{lightblue} \textbf{Model} & \textbf{Method} & \textbf{BoolQ} & \textbf{PIQA} & \textbf{SIQA} & \textbf{HS} & \textbf{WG} & \textbf{ARC-C} & \textbf{ARC-E} & \textbf{OBQA} & \textbf{avg.} \\ \midrule
            \multirow{10}{*}{LLaMA-7B} 
            & LoRA$_{r=4}$ & 66.15 & 43.47 & 42.12 & 24.46 & 72.85 & 47.18 & 53.03 & 48.80 & 49.76 \\
            & LoRA$_{r=2}$ & $\underline{67.77}$ & 66.50 & 40.63 & 21.85 & 53.28 & 50.26 & 63.51 & 52.00 & 51.97 \\
            & LoRA$_{r=1}$ & 66.15 & 74.05 & 73.58 & 35.24 & \bfseries 77.19 & 59.56 & 76.43 & 70.80 & 66.62 \\ \cmidrule{2-11} 
            & DoRA$_{r=4}$ & 32.35 & 7.13 & 47.03 & 27.54 & 0.00 & 52.65 & 66.37 & 46.60 & 34.96 \\
            & DoRA$_{r=2}$ & 57.55 & 70.38 & \bfseries 76.41 & 48.55 & 9.71 & 62.03 & $\underline{78.66}$ & \bfseries 75.40 & 59.84 \\
            & DoRA$_{r=1}$ & 67.16 & 77.26 & $\underline{76.25}$ & 31.38 & 20.60 & 57.34 & 70.50 & 64.00 & 58.06 \\ \cmidrule{2-11} 
            & LoFT$_{r=4}$ & 67.34 & \bfseries 80.96 & 76.20 & \bfseries 80.50 & $\underline{76.40}$ & $\underline{63.62}$ & \bfseries 79.21 & \bfseries 75.40 & $\mathbf{74.95}$ \\
            & LoFT$_{r=2}$ & \bfseries 68.03 & $\underline{79.16}$ & 75.84 & $\underline{78.86}$ & 76.24 & \bfseries 64.51 & 78.03 & $\underline{71.00}$ & $\underline{73.96}$ \\
            & LoFT$_{r=1}$ & 67.09 & 78.35 & 74.46 & 76.14 & 74.82 & 58.87 & 76.85 & 70.80 & 72.17 \\ 
            \bottomrule
        \end{tabular}
    }
    \label{tab: lower-ranks-commonsense}
\end{table}

\subsection{Mathematical Reasoning and Quantized LoFT} 
\label{appendix: qloft}

\paragraph{Setup.} We evaluate exact-match accuracy on the Orca-Math dataset \citep{mitra2024orca} using LLaMA2 and LLaMA3 models. Our experimental setup is largely based on the QLoRA fine-tuning recipe outlined by Answer.ai \citep{answerai2024fsdpqdora}, with a few key modifications. Specifically, we quantize the pre-trained model to $4$-bit and fine-tune each model for $3$ epochs on $200$k training examples using $\operatorname{bf16}$ precision, a global batch size of $32$, the AdamW optimizer, and a shortened context window of $256$ tokens. Evaluation is performed on $500$ held-out examples using exact-match comparison, following the original methodology. We adopt the zero-shot and five-shot prompting results directly from the blog post: for LLaMA2, these are $0.07$ and $0.08$, and for LLaMA3, $0.23$ and $0.27$, respectively. 

For parameter-efficient fine-tuning, we compare QLoRA \citep{dettmers2023qlora} with our proposed method, QLoFT -- a quantized variant of LoFT designed for greater efficiency. We evaluate QLoRA at a fixed rank of $16$, yielding $0.15$ accuracy on LLaMA2 and $0.292$ accuracy on LLaMA3. Under the same rank $(r{=}16)$, QLoFT achieves higher accuracy: $0.16$ on LLaMA2 and $0.324$ on LLaMA3. To assess robustness under constrained parameter budgets, we further reduce QLoFT's rank to $8, 4$ and $1$. Even with $75\%$ fewer trainable parameters $(r{=}4)$, QLoFT maintains strong performance -- $0.148$ on LLaMA2 and $0.318$ on LLaMA3 -- matching or exceeding QLoRA's results. At $r{=}1$, it still performs competitively, reaching $0.164$ on LLaMA2 and $0.276$ on LLaMA3. 

Overall, QLoFT consistently outperforms QLoRA at equivalent ranks across both model backbones, demonstrating better adaptation capacity with identical parameter budgets. More importantly, the performance drop as the rank decreases is surprisingly small, highlighting QLoFT’s ability to retain strong accuracy even in highly constrained regimes. On LLaMA3, the benefits are even more pronounced: QLoFT outperforms QLoRA by over $3$ points at $r{=}16$, and continues to lead at $r{=}\{8, 4\}$. This suggests that QLoFT better leverages the capacity of larger models, effectively leveraging their increased capacity for improved tuning. 

\begin{figure}[t]
    \centering
    \includegraphics[width=0.9\linewidth]{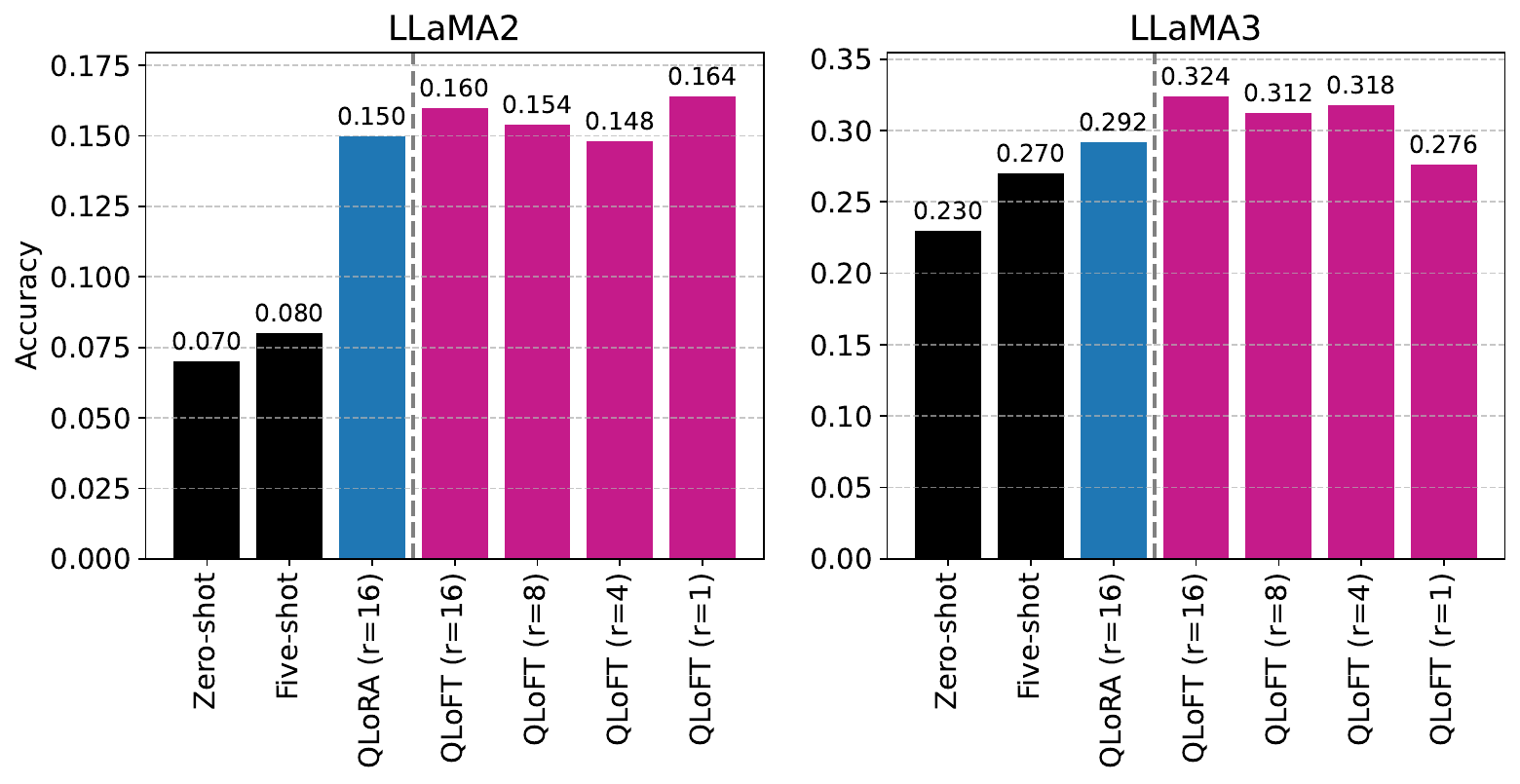}
    \caption{Accuracy comparison on the Orca-Math dataset using LLaMA2 and LLaMA3 models. We compare our method, QLoFT, quantized version of LoFT, with QLoRA. QLoFT is evaluated at various ranks $(r=\{16, 8, 4, 1\})$ and consistently outperforms QLoRA, demonstrating superior performance in parameter-efficient fine-tuning for mathematical reasoning. } 
    \label{fig: qloft-math}
    \vspace{-0.5em}
\end{figure}

\subsection{Scaling to LLaMA-3-70B} 
\label{appendix: scaling-70b}

To further evaluate the scalability of LoFT, we conduct experiments on the substantially larger LLaMA-3-70B model. These experiments follow the same unified training protocol used in the main commonsense reasoning evaluation. All hyperparameters follow the same configuration as explained in Section~\ref{appendix: hyperparamters}, with varying rank $r$ and the learning rate set to $5\times 10^{-3}$. 
Table~\ref{tab: scaling-to-70b} reports the results for LoFT at ranks $r \in \{16, 4, 2, 1\}$. These results confirm that the robustness trends observed on smaller backbones persist at the $70$B scale, even at rank $1$.

\begin{table}[h]
    \centering
    \caption{Commonsense reasoning results on LLaMA-3-70B using LoFT at varying ranks.}
    \label{tab: scaling-to-70b}
    \resizebox{\linewidth}{!}{
    \begin{tabular}{l cccccccc c}
        \toprule 
        \rowcolor{lightblue} \textbf{Method} & \textbf{BoolQ} & \textbf{PIQA} & \textbf{SIQA} & \textbf{HS} & \textbf{WG} & \textbf{ARC-C} & \textbf{ARC-E} & \textbf{OBQA} & \textbf{avg.} \\ \midrule 
        LoFT ($r=16$) & $79.45$ & $92.82$ & $82.24$ & $97.02$ & $90.06$ & $92.75$ & $97.52$ & $93.40$ & $90.66$ \\
        LoFT ($r=4$)  & $79.17$ & $92.60$ & $81.58$ & $96.70$ & $89.34$ & $92.06$ & $97.31$ & $92.60$ & $90.17$ \\
        LoFT ($r=2$)  & $78.38$ & $92.55$ & $81.78$ & $96.61$ & $88.40$ & $92.06$ & $97.31$ & $92.40$ & $89.94$ \\
        LoFT ($r=1$)  & $77.83$ & $91.51$ & $80.35$ & $95.79$ & $86.03$ & $91.47$ & $97.05$ & $91.40$ & $88.93$ \\
        \bottomrule 
    \end{tabular}
    }
\end{table}

\section{Vision Experiments}

\subsection{Training Dynamics}

In Figure \ref{fig: ham10k-training-curve} of the main paper, we presented the training performance curves on the HAM10000 dataset. Here, in Appendix Figure \ref{fig: training-curves-rem}, we show analogous training-loss dynamics (log scale) for the three remaining image-classification benchmarks: ISIC2019, Diabetic Retinopathy, and DomainNet. Each panel plots the raw per-step loss ($\alpha{=}0.25$) beneath a $10$-step centered moving average, with a zoomed inset in the upper-right corner of the latter two datasets to highlight differences in the final epochs. 

\begin{figure}[h]
    \centering
    \includegraphics[width=\linewidth]{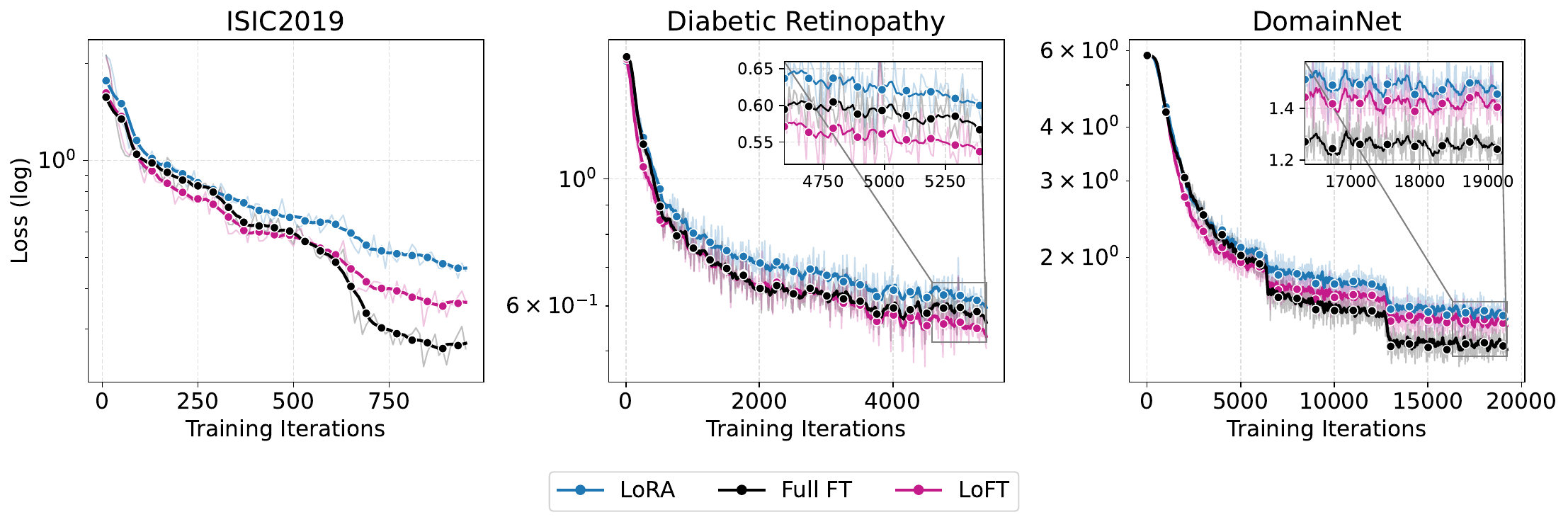}
    \caption{Additional training-loss dynamics for image classification. For the remaining benchmarks, ISIC2019 (left), Diabetic Retinopathy (center), and DomainNet (right), we plot training loss. \textbf{\color{magenta}LoFT~(magenta)} consistently outperforms \textbf{\color{blue}LoRA~(blue)} and closely tracks \textbf{full fine‐tuning~(black)}, achieving the lowest loss on Diabetic Retinopathy and substantially narrowing the gap on ISIC2019 and DomainNet. See Figure \ref{fig: ham10k-training-curve} in the main paper for the HAM10000 curves.} 
    \label{fig: training-curves-rem}
\end{figure}

Across all three tasks, {\color{magenta}LoFT (magenta)} consistently outperforms {\color{blue}LoRA (blue)} and closes much of the gap to {full fine-tuning (black)}. In particular: 
\begin{itemize}
    \item \textbf{Diabetic Retinopathy}: LoFT achieves the lowest training loss of all three methods throughout, demonstrating its strongest advantage in this medical imaging dataset. 
    \item \textbf{ISIC2019 \& DomainNet}: LoFT again reduces loss more quickly than LoRA and tracks very closely to full fine-tuning, especially in the later stages. While full FT still attains the absolute minimum loss, LoFT narrows the difference relative to LoRA. 
\end{itemize}

\subsection{DomainNet: Domain-Specific Results}

We would like to include the extended results of the experiment on the DomainNet dataset, including domain-specific performance results. 

Table \ref{tab: domainnet-results} complements the cross-dataset comparison in Table~\ref{tab: image-classification} (main paper) by breaking down the DomainNet dataset results by domain (\textit{clipart}, \textit{infograph}, \textit{painting}, \textit{quickdraw}, \textit{real}, and \textit{sketch}). All runs use the same ViT-Base backbone and optimization protocol described in Section~\ref{appendix: impl-details}. 

\begin{table}[h] 
    \centering
    \caption{Domain-specific accuracy results on the DomainNet dataset. While overall DomainNet results are presented in the main paper, this table provides detailed per-domain accuracy for various parameter-efficient fine-tuning methods.}
    \label{tab: domainnet-results}
    \resizebox{\linewidth}{!}{
        \begin{tabular}{clcccccc|c}
            \toprule 
            \multirow{2.5}{*}{\textbf{Model}} & \multirow{2.5}{*}{\textbf{Method}} & \multicolumn{7}{c}{\cellcolor{lightblue}\textbf{DomainNet Dataset}} \\ \cmidrule{3-9} 
            & & \cellcolor{lightblue}\textbf{clipart} & \cellcolor{lightblue}\textbf{infograph} & \cellcolor{lightblue}\textbf{painting} & \cellcolor{lightblue}\textbf{quickdraw} & \cellcolor{lightblue}\textbf{real} & \cellcolor{lightblue}\textbf{sketch} & \cellcolor{lightblue}\textbf{avg} \\ \midrule 
            \multirow{7}{*}{ViT-Base} 
            & Full FT & $\mathbf{78.92}$ & $\mathbf{44.09}$ & $\mathbf{73.11}$ & $\mathbf{69.15}$ & $83.92$ & $\mathbf{69.00}$ & $\mathbf{69.70}$ \\ \cmidrule{2-9} 
            & LoRA$_{r=16}$ & $77.64$ & $42.86$ & $72.44$ & $66.59$ & $\underline{84.50}$ & $67.21$ & $68.54$ \\ \cmidrule{2-9} 
            & DoRA$_{r=16}$ & $73.15$ & $40.14$ & $69.46$ & $60.83$ & $82.60$ & $64.38$ & $65.09$ \\ \cmidrule{2-9} 
            & LoFT$_{r=16}$ & $\underline{78.11}$ & $\underline{42.95}$ & $\underline{72.80}$ & $\underline{68.10}$ & $\mathbf{84.55}$ & $\underline{68.37}$ & $\underline{69.15}$ \\ 
            & LoFT$_{r=8}$ & $76.77$ & $42.04$ & $71.56$ & $65.99$ & $84.30$ & $67.09$ & $67.96$ \\ 
            & LoFT$_{r=4}$ & $73.38$ & $40.15$ & $69.58$ & $60.98$ & $82.83$ & $64.10$ & $65.17$ \\ 
            \bottomrule
        \end{tabular}
    }
\end{table}

The overall DomainNet numbers reported in Table~\ref{tab: image-classification} already show that LoFT$_{r=16}$ narrows the gap to Full FT and outperforms both LoRA and DoRA. However, DomainNet's six domains differ markedly in style and label distribution; the per-domain breakdown reveals how each method copes with this heterogeneity.

\paragraph{Main observations:} 
\begin{itemize}
    \item Full fine-tuning remains strongest on average $(69.7\%)$, topping five of six domains. 
    \item LoFT with $r{=}16$ trails Full FT by only $\mathbf{0.55pp}$ on average and surpasses the full FT on the real domain $(84.55\%)$. 
    \item LoRA lags LoFT on every domain except real, where both methods are statistically tied. 
    \item DoRA and low-rank LoFT variants $(r{=}\{8,4\})$ show the expected accuracy drop, but LoFT retains at least parity with the corresponding LoRA/DoRA settings. 
\end{itemize}

In the main paper, we reported validation-set accuracy to keep the test labels unseen. For the extended analysis here we evaluate on the official test split ($176743$ images) to give a complete picture of domain-level generalization. No hyper-parameters were tuned on the test set; models are exactly those used in the main paper.

\subsection{Comparison with Vision-Centric Baselines} 

We further compare LoFT against two representative vision adaptation baselines: Visual Prompt Tuning (VPT) \citep{jia2022visualprompttuning} and ViT-Adapter \citep{houlsby2019parameter, chen2023vitadapter}. To cover both standard and challenging scenarios, we evaluate on CIFAR-100 (general image classification) \citep{krizhevsky2009learning} and Diabetic Retinopathy (a medical imaging dataset) \citep{graham2015diabetic}. Table~\ref{tab: vpt-adapter} summarizes the results.

\begin{table}[h]
    \centering
    \caption{Comparison of LoFT with VPT and ViT-Adapter on CIFAR-100 and Diabetic Retinopathy datasets using ViT-Base.} 
    \label{tab: vpt-adapter}
    \begin{tabular}{lcc}
        \toprule
        \rowcolor{lightblue} \textbf{Method} & \textbf{CIFAR-100} & \textbf{Diabetic Retinopathy} \\
        \midrule
        Adapter & $85.93$ & $51.82$ \\ 
        VPT     & $90.03$ & $50.28$ \\ \midrule 
        LoFT    & $\mathbf{91.20}$ & $\mathbf{58.49}$ \\ 
        \bottomrule
    \end{tabular}
\end{table}

LoFT achieves the best performance across both datasets. While VPT and Adapter perform well on CIFAR-100, they struggle on the more complex medical dataset, whereas LoFT maintains strong results in both settings.

\section{Coding Experiments}
\label{appendix: coding-exps}

To further evaluate LoFT on representative large language model benchmarks for code generation, we conduct additional experiments on HumanEval and HumanEval+~\citep{chen2021humaneval}. We~use LLaMA3-8B as the backbone model and fine-tune it on the CodeFeedback (Python-only) dataset~\citep{zheng2024opencodeinterpreter}, which consists of curated instruction-response pairs focused on Python code completion and correction. All experiments follow the same training recipe described in the main paper; the only variable changed across configurations is the rank $r$ of the low-rank adaptation. Evaluation is performed using the EvalPlus framework~\citep{liu2023evalplus}. For each problem, we sample 10 completions at temperature $T=0.5$ and compute pass@k for $k \in \{1,5,10\}$. HumanEval+ is a stricter version of HumanEval with significantly expanded unit tests per task, providing a more rigorous assessment of functional correctness.

\begin{table}[h]
    \centering
    \caption{Rank scaling results on HumanEval and HumanEval+. }
    \label{tab: rank-scaling}
    \begin{tabular}{lc|cccccc}
        \toprule 
        \rowcolor{lightblue} & & \multicolumn{3}{c}{\textbf{HumanEval}} & \multicolumn{3}{c}{\textbf{HumanEval+}} \\ \midrule 
        \rowcolor{lightblue} \textbf{Method} & \textbf{rank $r$} & \textbf{pass@1} & \textbf{pass@5} & \textbf{pass@10} & \textbf{pass@1} & \textbf{pass@5} & \textbf{pass@10} \\ \midrule 
        \multirow{6}{*}{LoFT (simple)} & $8$   & $0.460$ & $0.682$ & $0.738$ & $0.405$ & $0.624$ & $0.683$ \\
        & $16$  & $0.455$ & $0.674$ & $0.744$ & $0.409$ & $0.623$ & $0.695$ \\ 
        & $32$  & $0.468$ & $0.673$ & $0.732$ & $0.423$ & $0.619$ & $0.679$ \\
        & $64$  & $0.465$ & $0.684$ & $0.768$ & $0.415$ & $0.626$ & $0.707$ \\
        & $128$ & $0.485$ & $0.678$ & $0.750$ & $0.436$ & $0.627$ & $0.697$ \\
        & $256$ & $0.482$ & $0.665$ & $0.736$ & $0.432$ & $0.623$ & $0.681$ \\ \midrule 
        \multirow{2}{*}{LoRA} &  $8$  & $0.421$ & $0.631$ & $0.720$ & $0.368$ & $0.572$ & $0.659$ \\
        & $64$ & $0.463$ & $0.670$ & $0.738$ & $0.416$ & $0.622$ & $0.701$ \\ 
        \bottomrule 
    \end{tabular}
\end{table}

We first study the effect of increasing the rank beyond the range $r \le 32$ considered in the main paper. Table~\ref{tab: rank-scaling} reports results for LoFT (simple) at $r \in \{8,16,32,64,128,256\}$ and compares against LoRA under identical training and evaluation settings. At low rank ($r=8$), LoFT already achieves strong performance, with pass@1 of 0.460 on HumanEval and 0.405 on HumanEval+. As the rank increases, performance improves modestly, reaching peak pass@1 at $r=128$ (0.485 on HumanEval and 0.436 on HumanEval+), before flattening at $r=256$. For broader sampling metrics, pass@10 reaches its highest value at $r=64$ on both benchmarks (0.768 on HumanEval and 0.707 on HumanEval+). These results indicate clear diminishing returns as rank increases, showing that small ranks already capture most of the achievable gains. 

Importantly, LoFT demonstrates strong efficiency in the low-rank regime. At matched rank $r=8$, LoFT outperforms LoRA (0.460 vs.\ 0.421 pass@1 on HumanEval, and 0.405 vs.\ 0.368 on HumanEval+). Notably, LoFT at $r=8$ matches or exceeds the performance of LoRA at $r=64$, indicating that LoFT achieves comparable accuracy with a substantially smaller rank. This highlights improved expressivity per parameter and better optimization alignment in the low-rank setting.

We further examine the effect of context length by evaluating both LoFT and LoRA at sequence lengths of 256 and 512 tokens with a fixed rank $r=8$. Results are shown in Table~\ref{tab: context-length}. LoFT consistently outperforms LoRA across both context lengths and both benchmarks. On HumanEval, LoFT achieves pass@1 of 0.482 and 0.488 at context lengths 256 and 512, respectively, compared to 0.421 and 0.433 for LoRA. On HumanEval+, we observe a similar trend. The improvements remain stable across context windows, suggesting that the gains are not sensitive to moderate changes in sequence length.

\begin{table}[t]
    \centering
    \caption{Context length comparison at fixed rank $r=8$. LoFT consistently outperforms LoRA across both context settings and both benchmarks.} 
    \label{tab: context-length}
    \begin{tabular}{lc|cccccc}
        \toprule 
        \rowcolor{lightblue} & & \multicolumn{3}{c}{\textbf{HumanEval}} & \multicolumn{3}{c}{\textbf{HumanEval+}} \\ \midrule 
        \rowcolor{lightblue} \textbf{Method} & \textbf{Context} & \textbf{pass@1} & \textbf{pass@5} & \textbf{pass@10} & \textbf{pass@1} & \textbf{pass@5} & \textbf{pass@10} \\ \midrule 
        LoRA & $256$ & $0.421$ & $0.631$ & $0.720$ & $0.368$ & $0.572$ & $0.659$ \\ 
        LoFT & $256$ & $0.482$ & $0.681$ & $0.756$ & $0.439$ & $0.614$ & $0.689$ \\ \midrule 
        LoRA & $512$ & $0.433$ & $0.640$ & $0.695$ & $0.380$ & $0.583$ & $0.646$ \\ 
        LoFT & $512$ & $0.488$ & $0.674$ & $0.738$ & $0.445$ & $0.612$ & $0.671$ \\ 
        \bottomrule 
    \end{tabular}
\end{table}

\end{document}